%% file: arxiv_update.tex
\documentclass{article}

\usepackage[preprint, nonatbib]{neurips_2026}

\usepackage[utf8]{inputenc} 
\usepackage[T1]{fontenc}    
\usepackage{hyperref}       
\usepackage{url}            
\usepackage{booktabs}       
\usepackage{amsfonts}       
\usepackage{nicefrac}       
\usepackage{microtype}      
\usepackage{xcolor}         
\usepackage[ruled,linesnumbered]{algorithm2e} 

\usepackage[font=small]{caption}
\usepackage{subcaption}  
\usepackage{graphicx}
\usepackage{multirow}
\usepackage{float}
\usepackage{amsmath}
\usepackage{amssymb}
\usepackage{mathtools}
\usepackage{amsthm}
\usepackage{enumitem}
\usepackage{multicol}
\usepackage{multirow}
\usepackage[table]{xcolor}
\usepackage{tabularx}
\usepackage{booktabs}

\usepackage[capitalize,noabbrev]{cleveref}
\usepackage{tikz}
\usetikzlibrary{arrows.meta,positioning,fit,calc,matrix}

\usepackage{algorithmic}
\usepackage{varwidth}

\usepackage{amsfonts}       
\usepackage{bm}
\usepackage{bbm}

\DeclareMathOperator*{\argmin}{arg\,min}
\newtheorem{theorem}{Theorem}[section]
\newtheorem{proposition}[theorem]{Proposition}
\newtheorem{lemma}[theorem]{Lemma}

\usepackage[most]{tcolorbox}
\usepackage{tikz}
\usetikzlibrary{arrows.meta, calc, decorations.markings, angles, quotes}

\definecolor{cReach}{HTML}{1F7A6D}
\definecolor{cPerp}{HTML}{C75D3C}
\definecolor{cTotal}{HTML}{2B5A9E}
\definecolor{cTrade}{HTML}{8B5BAA}
\definecolor{cTarget}{HTML}{1A4F6E}
\definecolor{cMuted}{HTML}{6B7280}
\definecolor{cFaint}{HTML}{9CA3AF}

\usepackage[textsize=tiny]{todonotes}

\newcommand{\R}{\mathbb{R}}

\newcommand{\calL}{\mathcal{L}}
\newcommand{\calQ}{\mathcal{Q}}

\title{CoreQ: Learning-Free Mismatch Correction and Successive Rounding for Quantization
}

\author{%
Seohyeon Cha$^{1}$ \quad Huancheng Chen$^{1}$ \quad Dongjun Kim$^{1}$ \quad Haoran Zhang$^{1}$ \\
\textbf{Kevin Chan}$^2$ \quad \textbf{Gustavo de Veciana}$^1$ \quad \textbf{Haris Vikalo}$^1$ \\
$^1$The University of Texas at Austin \quad $^2$DEVCOM Army Research Laboratory\\
}

\begin{document}

\maketitle
\vspace{-0.1 in}

\begin{abstract}

Post-training quantization (PTQ) enables efficient deployment of large language models by mapping pretrained weights to low-bit formats without retraining, typically using a small calibration set to minimize a layer-wise calibration objective. However, this sequential procedure induces a mismatch: errors from earlier quantized layers alter the inputs received by later layers, causing the activations to deviate from those of the full-precision model. Recent approaches introduce mismatch-aware calibration objectives to compensate for this effect, but leave open how much of the observed mismatch should shift each layer's calibration target. Fully applying this correction can overfit limited calibration data, while scaling the mismatch correction with a fixed coefficient ignores varying reliability of mismatch estimates across layers. To address these limitations, we propose CoreQ, a learning-free PTQ framework that applies a closed-form coefficient for mismatch correction derived from a geometric decomposition of the mismatch. The resulting coefficient adapts the correction across layers, reduces overfitting to finite calibration data, and requires no hyperparameter tuning. Given the corrected target, CoreQ minimizes the induced triangular least-squares objective with an efficient greedy successive-rounding solver and a bounded beam-search extension, K-CoreQ, that trades modest additional compute for improved performance. Across multiple LLM families, scales, bit-widths, and quantization settings, CoreQ improves perplexity and downstream accuracy over strong PTQ baselines.

\end{abstract}
\vspace{-0.1 in}
  
\input{main/intro}

\input{main/background}
\input{main/obj_ver2}
\input{main/result}

\input{main/related_work}
\input{main/conclusion}
\newpage
\bibliography{reference}
\bibliographystyle{reference}
\newpage

\newpage
\appendix
\input{appendix/appendix}


\end{document}

%% file: main/intro.tex
\section{Introduction}

Large language models (LLMs) achieve strong performance across a wide range of tasks, but their size presents a major challenge to practical deployment \cite{ge2023openagi,yang2024harnessing,nam2024using}. Modern transformer-based LLMs contain billions of parameters, requiring tens to hundreds of gigabytes of weight storage when stored in high precision and incurring substantial inference latency due to memory traffic \cite{gholami2024ai,kim2023squeezellm}. These costs limit deployment on commodity hardware and make on-device or edge inference difficult \cite{xu2024device}. Consequently, reducing memory footprint and runtime while preserving accuracy has become essential for practical LLM deployment.

Post-training quantization (PTQ) enables compression of a pretrained model without retraining by mapping its weights to low-bit formats, providing a practical path to efficient inference. To remain scalable, PTQ is typically performed layer-by-layer. Given a small calibration set, sequential PTQ quantizes each layer by solving a discrete rounding problem that minimizes a layer-wise calibration objective using activations from the already-quantized prefix \cite{nagel2020up,hubara2021accurate,li2021brecq,frantar2022obq,frantar2022gptq}. While efficient, this layer-wise procedure ignores the compounding effect of earlier quantization errors across layers. As layers are quantized sequentially, the activations entering deeper layers deviate from those in the full-precision model, creating a mismatch between the full-precision and partially quantized computation paths \cite{ligptaq,arai2026qep,liao2024apiq}.

Recent methods, such as GPTAQ \cite{ligptaq} and QEP \cite{arai2026qep}, address this problem by redefining the calibration objective: they use the full-precision layer output on full-precision activations as the calibration target, while evaluating the quantized layer on activations produced by the quantized prefix. This incorporates the upstream-induced mismatch into layer-wise PTQ, but leaves open how strongly the observed mismatch should shift the calibration target under finite calibration data. Applying the full correction can overfit calibration data, while using a fixed correction coefficient cannot adapt to variation across layers and model families. Other approaches improve the layer-wise proxy through richer curvature estimates \cite{kimguidedquant,tseng2025model,kim2024boa} or by quantizing multiple layers jointly \cite{ding2023cbq}, but these strategies either increase the cost of the layer-wise objective or change the quantization unit.

Even with a well-chosen calibration objective, PTQ still requires solving a \emph{discrete} optimization problem over low-bit weight grids. Most methods obtain quantized weights by sequentially rounding the weight matrix, either through greedy column-wise rounding \cite{frantar2022gptq,chee2023quip,chen2025geometry,zheng2025foem} or via iterative refinement schemes such as coordinate descent \cite{behdin2023quantease,nair2024cdquant}. The rounding strategy directly affects quantization performance: poor early decisions can propagate across weight columns and substantially degrade the quality of the solution. Thus, improving PTQ requires not only a well-defined calibration objective, but also a more reliable way to search over the low-bit weight space. 


To address these limitations, we propose \textbf{CoreQ}, a learning-free PTQ framework that adaptively controls how strongly mismatch correction shifts each layer's calibration target. CoreQ computes a closed-form correction coefficient, \(\alpha_{\mathrm{corr}}\), by decomposing the mismatch into a  component reachable by changing the current layer's weights and an orthogonal residual. This provides a layer-adaptive alternative to full mismatch correction and fixed scaling, without model-specific tuning, gradient updates, or validation search. We then solve the resulting discrete rounding problem using successive-rounding algorithms. While CoreQ provides a greedy solution, \textbf{K-CoreQ} extends it with bounded beam search by maintaining multiple partial rounding paths. This expanded search reduces error accumulation from early weight choices and provides a practical trade-off between computational efficiency and quantization quality.

Our contributions are summarized as follows:
\vspace{-0.03 in}

\begin{itemize}[leftmargin=*,noitemsep]
\item \textbf{Finite-calibration analysis.} We show theoretically and empirically that fully compensating for mismatch on finite calibration data leads to overfitting. The analysis motivates a flexible, data-dependent mismatch correction strategy. 

\item \textbf{Layer-adaptive mismatch correction.}
We introduce CoreQ, a learning-free PTQ framework that computes a layer-wise correction coefficient with no need for gradient updates or validation search.

\item \textbf{Successive rounding.}
Given this correction coefficient, CoreQ solves the rounding problem induced by the mismatch-corrected calibration target by exploiting the triangular structure of a least-squares proxy. This yields an efficient greedy successive-rounding solver and a bounded beam-search extension, $K$-CoreQ, which provides a controllable performance-cost trade-off.

\item \textbf{Extensive empirical validation.}
Across multiple LLM families, model scales, bit-widths, and quantization settings, CoreQ and $K$-CoreQ improve perplexity and downstream task accuracy over strong PTQ baselines. For example, on 2-bit quantized LLaMA-2-70B, we achieve a \textbf{48--56\%} improvement in WikiText-2 perplexity relative to the best baseline.

\end{itemize}

\vspace{-0.05 in}

%% file: main/background.tex
\section{Background: Calibration Objectives for Layer-Wise PTQ}
\label{sec:background}

We consider sequential, layer-wise post-training quantization (PTQ) of a transformer model. All quantities below refer to a single layer; we suppress the layer index \(l\) for clarity. Let \(\mathbf W\in\mathbb R^{m\times n}\) denote the full-precision weight matrix of the current layer, and let \(\mathbf Q\in\mathcal Q\) denote its quantized counterpart on a discrete low-bit grid. 

Let
\(\mathbf X_{\mathrm q}
= [x_{\mathrm q}^{(1)},\ldots,x_{\mathrm q}^{(N)}] \in\mathbb R^{n\times N}\)
denote the matrix of \(N\) calibration activation vectors entering the current layer after all preceding layers have been quantized. Layer-wise PTQ problem can be written in the common form
\(
    \min_{\mathbf Q\in\mathcal Q}
    \|\mathbf Y-\mathbf Q\mathbf X_{\mathrm q}\|_F^2,
\)
where \(\mathbf Y\) is the \emph{calibration target}, i.e., the desired output that the quantized layer is asked to match on the quantized-prefix activation \(\mathbf X_{\mathrm q}\).

\paragraph{Standard Calibration Objective.}
Standard PTQ methods use the full-precision output on the same quantized-prefix activations as the calibration target, i.e., \(\mathbf Y=\mathbf W\mathbf X_{\mathrm q}\). This gives the standard calibration objective \cite{nagel2020up}
\begin{equation}
  \mathcal L_{\mathrm{std}}(\mathbf Q;\mathbf X_{\mathrm q})
  :=
  \|(\mathbf W-\mathbf Q)\mathbf X_{\mathrm q}\|_F^2
  =
  \mathrm{tr}\!\big(
  (\mathbf W-\mathbf Q)\mathbf H
  (\mathbf W-\mathbf Q)^\top
  \big),
  \label{eq:std-cal-loss}
\end{equation}
where \(\mathbf H:=\mathbf X_{\mathrm q}\mathbf X_{\mathrm q}^{\top}\) serves as the Hessian proxy. Solvers such as GPTQ \cite{frantar2022gptq} and LDLQ \cite{chee2023quip} approximately minimize \eqref{eq:std-cal-loss} by sequential rounding guided by this proxy \cite{kimguidedquant,tseng2025model}.

\paragraph{Mismatch-Aware Calibration Objective.}
Let \(\mathbf X_{\mathrm f}\in\mathbb R^{n\times N}\) denote the full-precision activations corresponding to the same calibration tokens as \(\mathbf X_{\mathrm q}\). Since the standard calibration target is built from \(\mathbf X_{\mathrm q}\), it does not include the upstream-induced \emph{mismatch}
\[
    \mathbf D
    :=
    \mathbf W(\mathbf X_{\mathrm f}-\mathbf X_{\mathrm q}),
\]
which captures the effect of preceding quantized layers at the current layer \cite{ding2023cbq,liao2024apiq}. Mismatch-aware calibration incorporates this mismatch by using the target \(\mathbf Y=\mathbf W\mathbf X_{\mathrm f}\). The resulting mismatch-aware calibration objective is \cite{ligptaq,arai2026qep}
\begin{equation}
  \mathcal L_{\mathrm{mis}}(\mathbf Q;\mathbf X_{\mathrm f},\mathbf X_{\mathrm q})
  :=
  \|\mathbf W\mathbf X_{\mathrm f}-\mathbf Q\mathbf X_{\mathrm q}\|_F^2
  =
  \|(\mathbf W-\mathbf Q)\mathbf X_{\mathrm q}+\mathbf D\|_F^2.
  \label{eq:mismatch-rec-loss}
\end{equation}
Thus, both objectives optimize the quantized layer on \(\mathbf X_{\mathrm q}\), but differ in the calibration target: standard calibration uses \(\mathbf Y=\mathbf W\mathbf X_{\mathrm q}\), while mismatch-aware calibration uses \(\mathbf Y=\mathbf W\mathbf X_{\mathrm f}\).

%% file: main/obj_ver2.tex
\section{CoreQ: Mismatch-Corrected Calibration and Successive Rounding}
\label{sec:methodology}

Section~\ref{sec:background} introduced two endpoint calibration objectives: standard calibration, which uses the target $\mathbf W\mathbf X_{\mathrm q}$, and mismatch-aware calibration, which uses the target $\mathbf W\mathbf X_{\mathrm f}$. We start from the observation that neither endpoint is generally reliable: the former ignores the mismatch, while the latter can overfit finite calibration data. CoreQ addresses this by constructing an interpolated calibration target with a closed-form correction coefficient, avoiding manual tuning or validation search.

\subsection{Calibration Target Selection}

\input{files/framework}

\paragraph{Orthogonal decomposition of the mismatch.} 
From \eqref{eq:mismatch-rec-loss}, the current layer can offset the mismatch $\mathbf{D}$ only through the term $(\mathbf{W}-\mathbf{Q})\mathbf{X}_\mathrm{q}$. Since $\mathbf{X}_\mathrm{q}$ is fixed during layer-wise calibration, all output changes realizable by changing the current layer's weights lie in the \emph{reachable subspace}
$\mathcal{R} := \{\mathbf{A}\mathbf{X}_\mathrm{q} : \mathbf{A}\in\R^{m\times n}\}$.
The mismatch $\mathbf{D}$ need not lie entirely in this subspace. Projecting it onto $\mathcal{R}$ yields the orthogonal decomposition
\begin{equation}
\mathbf{D}
=
\widetilde{\mathbf{S}}\mathbf{X}_\mathrm{q}
+
\boldsymbol{\eta},
\quad
\widetilde{\mathbf{S}}
:=
\mathbf{D}\mathbf{X}_\mathrm{q}^\top
(\mathbf{X}_\mathrm{q}\mathbf{X}_\mathrm{q}^\top)^{-1},
\quad
\boldsymbol{\eta}\perp\mathcal{R},
\label{eq:U-decomp}
\end{equation}
where $\widetilde{\mathbf{S}}\mathbf{X}_\mathrm{q}\in\mathcal{R}$ is the part of the mismatch absorbable by changing the current layer's weights, while $\boldsymbol{\eta}$ is the residual component.

Substituting this decomposition into \eqref{eq:mismatch-rec-loss} separates the reachable component from the residual:
\begin{equation}
  \calL_{\mathrm{mis}}(\mathbf{Q})
  =
  \|(\mathbf{W}+\widetilde{\mathbf{S}})\mathbf{X}_\mathrm{q}
  -\mathbf{Q}\mathbf{X}_\mathrm{q}\|_F^2
  +
  \|\boldsymbol{\eta}\|_F^2 .
  \label{eq:asym-split}
\end{equation}
Since the residual term is independent of $\mathbf{Q}$, minimizing $\calL_{\mathrm{mis}}$ is equivalent to using the \emph{reachable calibration target}
$\mathbf{Y}_{\mathcal R}:=(\mathbf{W}+\widetilde{\mathbf{S}})\mathbf{X}_\mathrm{q}$.
However, $\widetilde{\mathbf{S}}$ is estimated from finite calibration data, so using $\mathbf{Y}_{\mathcal R}$ without shrinkage can overfit finite-calibration error.

\input{files/alpha_sweep}

\paragraph{Choosing the correction coefficient.}
To control reliance on this finite-calibration estimate, we consider calibration targets along the path
$\mathbf{Y}^{(\alpha)} := \mathbf{W}\mathbf{X}^{(\alpha)}$, where
$\mathbf{X}^{(\alpha)} := (1-\alpha)\mathbf{X}_{\mathrm q}+\alpha \mathbf{X}_{\mathrm f}$
for $\alpha\in[0,1]$. Using
$\mathbf{D}=\widetilde{\mathbf{S}}\mathbf{X}_{\mathrm q}+\boldsymbol{\eta}$,
this target can be written as
\begin{equation}
    \mathbf{Y}^{(\alpha)}
    =
    \mathbf{W}\mathbf{X}_{\mathrm q}+\alpha \mathbf{D}
    =
    (\mathbf{W}+\alpha\widetilde{\mathbf{S}})\mathbf{X}_{\mathrm q}+\alpha\boldsymbol{\eta}.
    \label{eq:WX-alpha}
\end{equation}
Here, the \emph{correction coefficient} $\alpha$ controls how far the calibration target moves from standard calibration toward mismatch-aware calibration: $\alpha=0$ gives \eqref{eq:std-cal-loss}, while $\alpha=1$ gives \eqref{eq:mismatch-rec-loss}. The next theorem formalizes the trade-off between ignoring the mismatch and fully incorporating its finite-calibration estimate, showing that the population-optimal coefficient is generally smaller than $1$ under finite-calibration error.

\vspace{0.12in}
\begin{theorem}[Intermediate correction under finite-calibration error]
\label{thm:population-alpha-main}
Let $\bar{\mathbf H}:=\mathbb E[X_{\mathrm q}X_{\mathrm q}^\top]\succ0$ and
$\bar{\mathbf S}:=\mathbf{W}\mathbb E[(X_{\mathrm f}-X_{\mathrm q})X_{\mathrm q}^\top]\bar{\mathbf H}^{-1}$. The population
layer-wise error
$\mathcal L_{\mathrm{pop}}(\widehat{\mathbf{W}}):=
\mathbb E\|\mathbf{W}X_{\mathrm f}-\widehat{\mathbf{W}} X_{\mathrm q}\|_2^2$
is minimized over continuous weights by \(\mathbf{W}^\star=\mathbf{W}+\bar{\mathbf S}\).
Assume \(\mathbb E[\widetilde{\mathbf S}]=\bar{\mathbf S}\). 
For continuous weights of the form $\mathbf{W}+\alpha\widetilde{\mathbf S}$, the expected excess population error is
\[
    R(\alpha)
    :=
    \mathbb E_{\widetilde{\mathbf S}}
    \!\left[
    \mathcal L_{\mathrm{pop}}(\mathbf{W}+\alpha\widetilde{\mathbf S})
    -
    \mathcal L_{\mathrm{pop}}(\mathbf{W}^\star)
    \right]=(1-\alpha)^2 \mu^2+\alpha^2 \sigma^2.
\]
Therefore, the optimal correction coefficient is
\(
    \alpha^\star
    :=
    \arg\min_{\alpha\in[0,1]}R(\alpha)
    =
    \frac{\mu^2}{\mu^2+\sigma^2},
\)
where
\(\mu^2:=\|\bar{\mathbf S}\bar{\mathbf H}^{1/2}\|_F^2\) is the ideal correction
energy and
\(\sigma^2:=\mathbb E_{\widetilde{\mathbf S}}
\|(\widetilde{\mathbf S}-\bar{\mathbf S})\bar{\mathbf H}^{1/2}\|_F^2\) is the finite estimation error energy.
Therefore, \(0<\alpha^\star<1\) whenever \(\mu^2>0\) and \(\sigma^2>0\).
\end{theorem}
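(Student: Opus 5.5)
The plan is to reduce everything to a weighted quadratic in the weight perturbation and then take the expectation over the finite-calibration estimate $\widetilde{\mathbf S}$, which we treat as independent of the fresh population sample $(X_{\mathrm f},X_{\mathrm q})$ in $\mathcal L_{\mathrm{pop}}$.

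\emph{Step 1: the population minimizer and an exact quadratic identity.} For any continuous $\widehat{\mathbf W}$, write $\widehat{\mathbf W}=\mathbf W^\star+\boldsymbol\Delta$. Then
\[
\mathbf W X_{\mathrm f}-\widehat{\mathbf W}X_{\mathrm q}=\big(\mathbf W X_{\mathrm f}-\mathbf W^\star X_{\mathrm q}\big)-\boldsymbol\Delta X_{\mathrm q}.
\]
Expanding the squared norm gives $\mathcal L_{\mathrm{pop}}(\widehat{\mathbf W})=\mathcal L_{\mathrm{pop}}(\mathbf W^\star)-2\,\mathrm{tr}\big(\boldsymbol\Delta\,\mathbb E[X_{\mathrm q}(\mathbf W X_{\mathrm f}-\mathbf W^\star X_{\mathrm q})^\top]\big)+\mathrm{tr}(\boldsymbol\Delta\bar{\mathbf H}\boldsymbol\Delta^\top)$.

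The cross term vanishes because of the normal equation. We have
\[
\mathbb E[(\mathbf W X_{\mathrm f}-\mathbf W^\star X_{\mathrm q})X_{\mathrm q}^\top]=\mathbf W\,\mathbb E[(X_{\mathrm f}-X_{\mathrm q})X_{\mathrm q}^\top]-\bar{\mathbf S}\bar{\mathbf H},
\]
and this equals $0$ by the definition of $\bar{\mathbf S}$. Hence
\[
\mathcal L_{\mathrm{pop}}(\widehat{\mathbf W})-\mathcal L_{\mathrm{pop}}(\mathbf W^\star)=\|\boldsymbol\Delta\bar{\mathbf H}^{1/2}\|_F^2.
\]
Since $\bar{\mathbf H}\succ0$, this excess is nonnegative and vanishes only at $\boldsymbol\Delta=0$. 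This proves that $\mathbf W^\star$ is the unique minimizer.

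\emph{Step 2: bias--variance split.} Take $\widehat{\mathbf W}=\mathbf W+\alpha\widetilde{\mathbf S}$, so that
\[
\boldsymbol\Delta=\alpha\widetilde{\mathbf S}-\bar{\mathbf S}=-(1-\alpha)\bar{\mathbf S}+\alpha(\widetilde{\mathbf S}-\bar{\mathbf S}).
\]
Substituting into the identity from Step 1 and expanding $\|\boldsymbol\Delta\bar{\mathbf H}^{1/2}\|_F^2$ gives three terms: $(1-\alpha)^2\mu^2$, $\alpha^2\|(\widetilde{\mathbf S}-\bar{\mathbf S})\bar{\mathbf H}^{1/2}\|_F^2$, and the cross term $-2\alpha(1-\alpha)\,\mathrm{tr}\big(\bar{\mathbf S}\bar{\mathbf H}(\widetilde{\mathbf S}-\bar{\mathbf S})^\top\big)$. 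The cross term is linear in $\widetilde{\mathbf S}-\bar{\mathbf S}$. Its expectation therefore vanishes under the unbiasedness assumption $\mathbb E[\widetilde{\mathbf S}]=\bar{\mathbf S}$, because $\bar{\mathbf H}$ and $\bar{\mathbf S}$ are deterministic. Taking expectation over $\widetilde{\mathbf S}$ then yields
\[
R(\alpha)=(1-\alpha)^2\mu^2+\alpha^2\sigma^2.
\]

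\emph{Step 3: optimizing $\alpha$.} The function $R$ is a convex quadratic in $\alpha$, strictly convex when $\mu^2+\sigma^2>0$. Setting
\[
R'(\alpha)=-2(1-\alpha)\mu^2+2\alpha\sigma^2=0
\]
gives $\alpha^\star=\mu^2/(\mu^2+\sigma^2)$. This value lies in $[0,1]$ automatically, so the constrained and unconstrained minimizers coincide. When $\mu^2,\sigma^2>0$, both the numerator and $\sigma^2$ are strictly positive, which gives $0<\alpha^\star<1$.

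\emph{Main obstacle.} The main subtlety is the bookkeeping of the two randomness sources in Step 2. The vanishing of the cross term needs $\mathcal L_{\mathrm{pop}}$ to be evaluated with $\widetilde{\mathbf S}$ held fixed, that is, as an expectation over an independent test draw. Only after that do we average over $\widetilde{\mathbf S}$. I would state this conditioning explicitly. A second point to check is the degenerate case $\mu^2=\sigma^2=0$, where $R\equiv0$ and the formula for $\alpha^\star$ should be read with any $\alpha$ optimal. The rest is routine trace algebra.
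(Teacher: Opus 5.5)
Your proposal is correct and follows the paper's proof essentially step for step. The paper completes the square to get the excess loss $\|(\widehat{\mathbf W}-\mathbf W^\star)\bar{\mathbf H}^{1/2}\|_F^2$ with $\mathbf W^\star=\mathbf W\,\mathbb E[X_{\mathrm f}X_{\mathrm q}^\top]\bar{\mathbf H}^{-1}=\mathbf W+\bar{\mathbf S}$; you instead check the normal equation at $\mathbf W+\bar{\mathbf S}$ directly, which is equivalent. From there you use the same split $-(1-\alpha)\bar{\mathbf S}+\alpha(\widetilde{\mathbf S}-\bar{\mathbf S})$, kill the cross term by unbiasedness, and minimize the scalar quadratic, exactly as the paper does; your remarks on evaluating $\mathcal L_{\mathrm{pop}}$ at fixed $\widetilde{\mathbf S}$ before averaging and on the degenerate case $\mu^2=\sigma^2=0$ (which the appendix version excludes by assuming $\mu^2+\sigma^2>0$) are appropriate refinements.
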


The proof is provided in Appendix~\ref{app:theory}. \cref{thm:population-alpha-main} shows that the population-optimal coefficient depends on the layer-specific ideal correction energy $\mu^2$ and finite estimation error energy $\sigma^2$, motivating layer-adaptive mismatch correction rather than a single fixed correction coefficient.

Since the population quantities in \cref{thm:population-alpha-main} are unavailable, CoreQ uses a criterion motivated by the mismatch decomposition in \eqref{eq:U-decomp}. For each layer, CoreQ chooses $\alpha$ to minimize the squared distance between $\mathbf{Y}^{(\alpha)}$ and the reachable calibration target $\mathbf{Y}_{\mathcal R}$, as illustrated in \cref{fig:framework1}-(b). Writing $\mathbf{s}:=\widetilde{\mathbf{S}}\mathbf{X}_{\mathrm q}$, the squared distance is 
\begin{equation}
  \|\mathbf{d}(\alpha)\|_F^2
  =
  (1-\alpha)^2\|\mathbf{s}\|_F^2
  +
  \alpha^2\|\boldsymbol{\eta}\|_F^2 .
  \label{eq:distance}
\end{equation}
This criterion selects the correction coefficient by balancing two effects: the first term penalizes leaving the reachable component of the mismatch uncorrected, while the second penalizes the residual component, which cannot be corrected by changing the current layer's weights. This interpretation is formalized in Lemma~\ref{lem:d-proxy}. Minimizing this distance gives the closed-form correction coefficient
\begin{equation}
  \alpha_{\mathrm{corr}}
  =
  \frac{\|\mathbf{s}\|_F^2}
       {\|\mathbf{s}\|_F^2+\|\boldsymbol{\eta}\|_F^2}
  =
  \frac{\|\widetilde{\mathbf{S}}\mathbf{X}_{\mathrm q}\|_F^2}
       {\|\widetilde{\mathbf{S}}\mathbf{X}_{\mathrm q}\|_F^2+\|\boldsymbol{\eta}\|_F^2}.
  \label{eq:alpha-corr}
\end{equation}
As shown in \cref{fig:alpha_analysis} and Appendix~\ref{app:alpha-analysis}, $\alpha_{\mathrm{corr}}$ varies systematically across layers, reducing the correction when the residual component is large.

\subsection{Successive Rounding with the Corrected Target}
\label{subsec:autoregressive-decoding}
With $\alpha_{\mathrm{corr}}$ selected, CoreQ uses the mismatch-corrected calibration target $\mathbf{Y}^{(\alpha_{\mathrm{corr}})}$ to choose the quantized weights $\mathbf{Q}$ over the low-bit grid. Let $\mathbf{W}_{\mathrm{corr}}:=\mathbf{W}+\alpha_{\mathrm{corr}}\widetilde{\mathbf{S}}$, 
optimizing $\mathbf Q$ with the corrected target is equivalent to the Hessian-weighted rounding problem
\begin{equation}
  \argmin_{\mathbf{Q}\in\mathcal Q}
  \|\mathbf{Y}^{(\alpha_{\mathrm{corr}})}-\mathbf{Q}\mathbf{X}_{\mathrm q}\|_F^2
  =
  \argmin_{\mathbf{Q}\in\mathcal Q}
  \bigl\|
  (\mathbf{W}_{\mathrm{corr}}-\mathbf{Q})\mathbf{L}
  \bigr\|_F^2,
  \quad
  \mathbf{H}:=\mathbf{X}_{\mathrm q}\mathbf{X}_{\mathrm q}^{\top}
  =
  \mathbf{L}\mathbf{L}^{\top}.
  \label{eq:shifted-rounding}
\end{equation}
where $\mathbf{L}$ is the lower-triangular matrix obtained from Cholesky decomposition of the Hessian. 

\paragraph{Triangular discrete least squares.}
To solve the problem in \eqref{eq:shifted-rounding}, we exploit its triangular least-squares form. Let $\mathbf{R} := \mathbf{L}^\top$. For row $i$, define the transformed target $\mathbf{y}_i := \mathbf{R}\,\mathbf{W}_{\mathrm{corr}, i, :}^\top$ and the quantized row vector $\mathbf{q}_i := \mathbf{Q}_{i, :}^\top$. The Frobenius objective decomposes across rows as $\sum_{i=1}^m \|\mathbf{R}\mathbf{q}_i - \mathbf{y}_i\|_2^2$.
Since $\mathbf{R}$ is upper triangular, each row can be rounded sequentially in reverse order. Suppressing the row index $i$, given a fixed suffix ${q}_{j+1:n}$, define the suffix-conditioned center $c_j$ and incremental cost $\Delta_j$ as
\begin{equation}
  c_j({q}_{j+1:n})
  := \frac{y_j - \sum_{k=j+1}^n \mathbf{R}_{j, k}\, q_k}{\mathbf{R}_{j, j}},
  \qquad
  \Delta_j(q_j \mid {q}_{j+1:n})
  := \mathbf{R}_{j, j}^2\, |q_j - c_j({q}_{j+1:n})|^2.
  \label{eq:triangular-defs}
\end{equation}
The row-wise objective then decomposes as $\|\mathbf{R}\mathbf{q} - \mathbf{y}\|_2^2 = \sum_{j=1}^n \Delta_j(q_j \mid {q}_{j+1:n})$.

\paragraph{Greedy decoding.}
We approximately solve this discrete optimization problem using greedy successive rounding. At step $j$, given the already rounded suffix ${q}_{j+1:n}$, \textbf{CoreQ} rounds the conditioned center $c_j$ to the nearest available quantization level in the grid $\calQ_j$, i.e., it finds
\begin{equation}
  q_j^\star({q}_{j+1:n}) \in \arg\min_{q \in \calQ_j} |q - c_j({q}_{j+1:n})|^2.
  \label{eq:greedy-step}
\end{equation}
As each weight is quantized, its quantization error shifts the centers of the remaining coordinates yet to be rounded. CoreQ recovers LDLQ when $\alpha_{\mathrm{corr}}=0$, while $K$-CoreQ further exploits the same triangular structure via bounded beam search in the following.

\paragraph{Bounded tree search: $K$-CoreQ.}
While efficient, greedy successive rounding may suffer from propagation of rounding errors because early errors affect later conditioned centers through the off-diagonal entries of $\mathbf{R}$ \cite{guo2006sd-beam}. To reduce this effect, we propose \textbf{$K$-CoreQ}, which replaces the single greedy path with a bounded beam search that keeps multiple rounding paths.


We view the rounding of each row as a tree search of depth $n$. At coordinate $j$, each beam state stores the already chosen suffix ${q}_{j+1:n}$ and its accumulated cost. To process coordinate $j$, $K$-CoreQ expands the $K$ retained states over all candidates $q_j \in \calQ_j$, scoring each extension according to:
\begin{equation}
  S_{j+1}({q}_{j+1:n}) := \sum_{t=j+1}^n \Delta_t(q_t \mid {q}_{t+1:n}), \quad S_j({q}_{j:n}) = S_{j+1} + \Delta_j(q_j \mid {q}_{j+1:n}).
\end{equation}
A \texttt{top-k} operation then retains the $K$ lowest-cost partial paths to serve as the beam states for coordinate $j-1$. Keeping multiple partial paths allows $K$-CoreQ to postpone making irreversible choices and reduce the effect of early rounding errors. Since the objective decomposes exactly across rows, the beam states can be tensorized, allowing the level-wise expansion and \texttt{top-k} pruning steps to run on GPU across rows. Details of the batched implementation are in Appendix~\ref{app:beam-implement}.


%% file: files/framework.tex
\begin{figure}
    \includegraphics[width=1.02\linewidth]{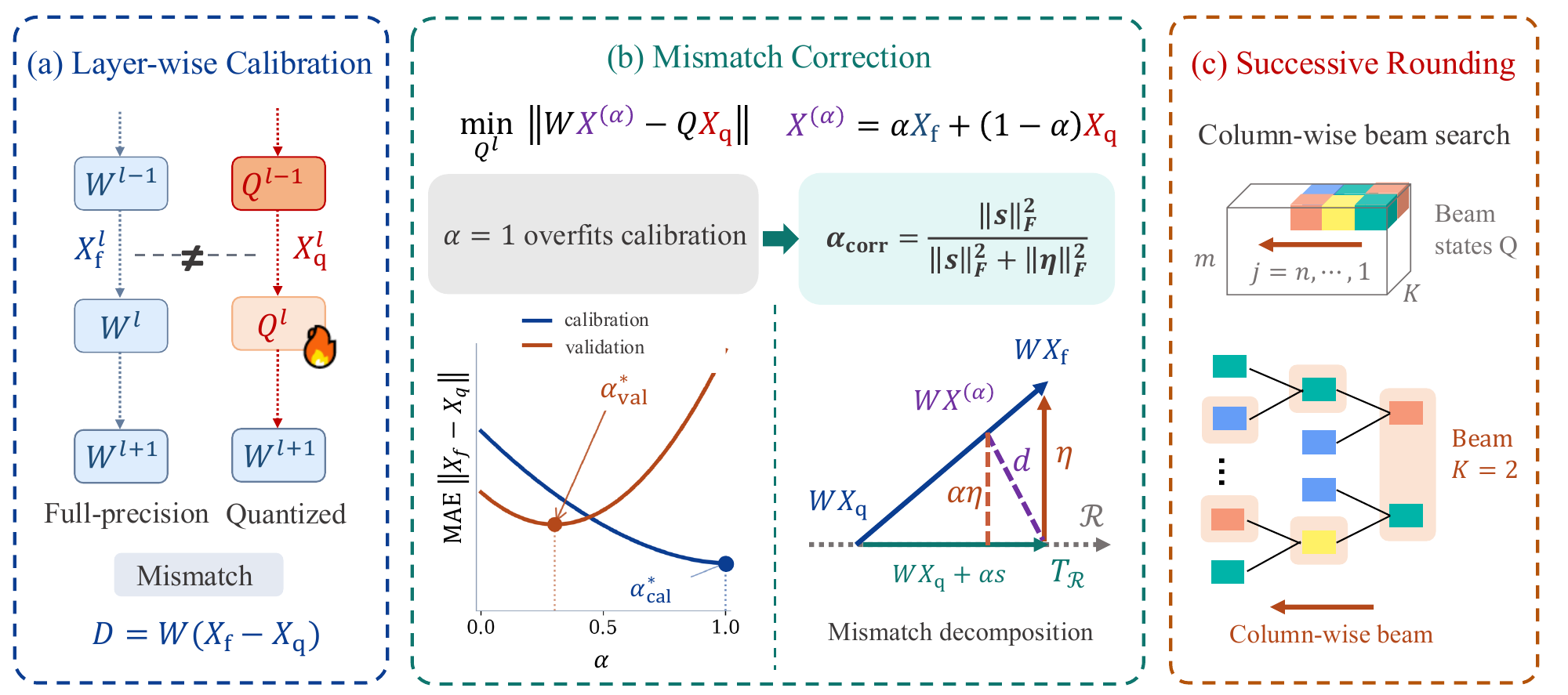}
\caption{
Overview of CoreQ.
(a) Sequential layer-wise quantization creates the mismatch between the full-precision and partially quantized activations. 
(b) CoreQ computes $\alpha_{\mathrm{corr}}$ to scale the mismatch correction according to the component reproducible by changing the current layer's weights.
(c) The resulting corrected target induces a triangular least-squares rounding problem, efficiently solved by CoreQ or $K$-CoreQ.
}
    \label{fig:framework1}
\end{figure}

%% file: files/alpha_sweep.tex

\begin{figure}
    \centering
    \begin{subfigure}[t]{1.0\linewidth}
        \centering
        \includegraphics[width=\linewidth]{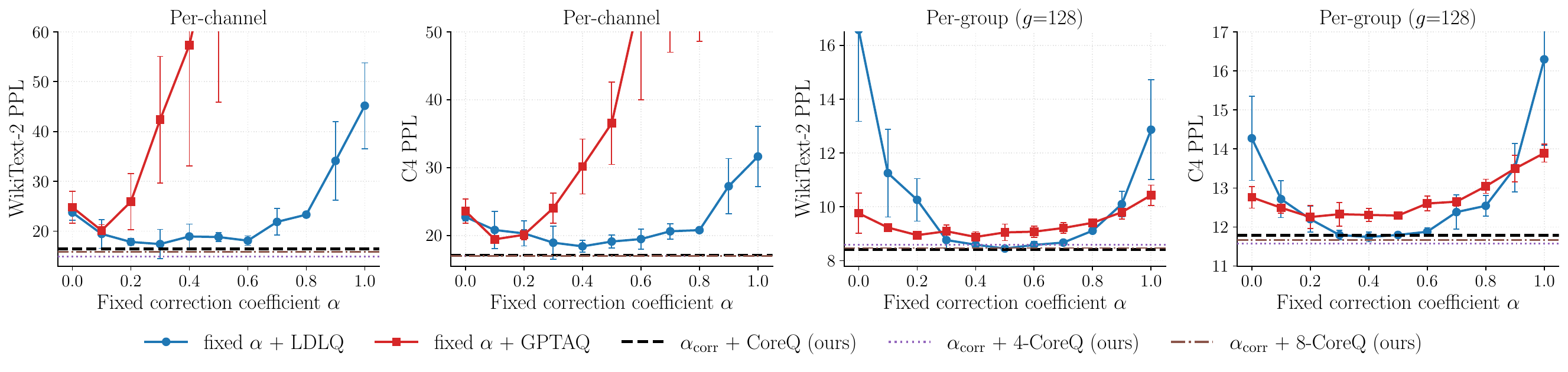}
        \vspace{-0.2 in}
        \label{fig:alpha_sweep}
    \end{subfigure}
    \begin{subfigure}[t]{1.0\linewidth}
        \centering
        \includegraphics[width=\linewidth]{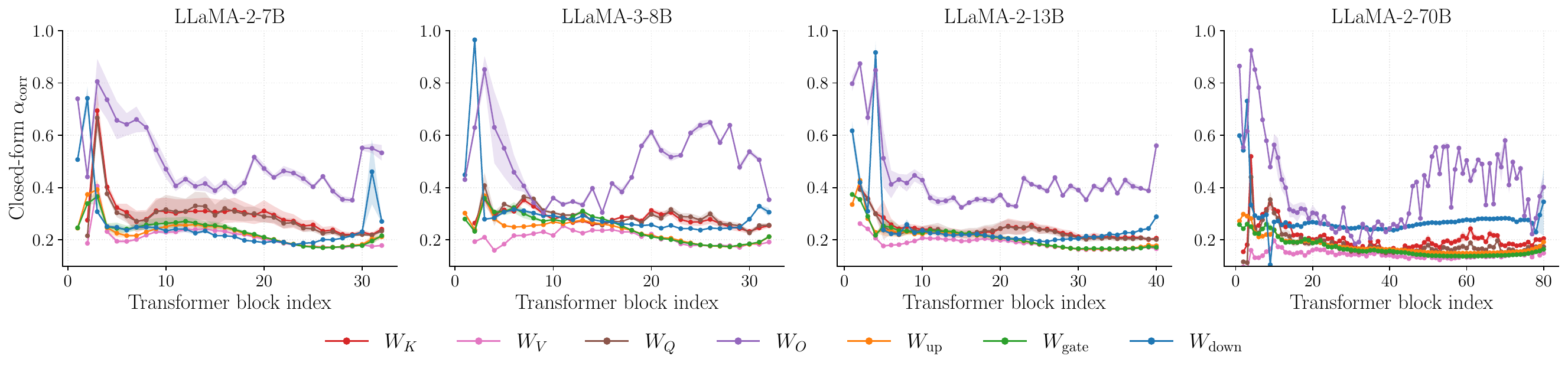}
        \vspace{-0.2 in}
        \label{fig:alpha_sweep}
    \end{subfigure}
    \caption{
    (Top) Fixed-$\alpha$ baselines use globally chosen correction coefficient on LLaMA-3-8B. CoreQ uses a per-layer $\alpha_{\mathrm{corr}}$ and matches or improves upon the best fixed $\alpha$ across granularities and datasets. (Bottom) $\alpha_{\mathrm{corr}}$ varies substantially across layer types and depths, with similar patterns across LLaMA model variants. This supports using layer-dependent correction coefficients. Additional observations are in Appendix~\ref{app:alpha-analysis}.
}
    \label{fig:alpha_analysis}
\end{figure}

%% file: main/result.tex
\section{Results}

We first evaluate CoreQ in the main PTQ settings in \cref{subsec:exp_main} across model families, scales, bit-widths, and quantization granularities. We then evaluate $K$-CoreQ in \cref{subsec:exp-beam}, showing how bounded beam search improves rounding quality while controlling runtime and memory. Additional results are provided in \cref{app:ablation}.

\vspace{-0.05 in}

\subsection{Setup}
\label{subsec:exp_setup}

We evaluate \textbf{CoreQ} for post-training quantization of decoder-only LLMs, including LLaMA~2/3~\cite{touvron2023llama, grattafiori2024llama}, Qwen3-8B~\cite{yang2025qwen3}, and Phi-3 Mini (3.8B)~\cite{abdin2024phi3}. Calibration uses 128 C4 samples with sequence length 2048. We report perplexity on WikiText2 and C4, and zero-shot accuracy on six commonsense reasoning tasks: WinoGrande~\cite{sakaguchi2021winogrande}, PIQA~\cite{bisk2020piqa}, HellaSwag~\cite{zellers2019hellaswag}, BoolQ~\cite{clark2019boolq}, ARC-Easy, and ARC-Challenge~\cite{clark2018think}. We also evaluate CoreQ on the instruction-tuned LLaMA-3.1-8B-Instruct~\cite{grattafiori2024llama} using MMLU~\cite{hendrycks2020mmlu} (5-shot) and GSM8K~\cite{cobbe2021gsm8k} with chain-of-thought prompting (8-shot); these results are reported in \cref{tab:l31-instruct} in the Appendix. Results are averaged over 5 seeds. Quantization time is reported as end-to-end wall-clock runtime on a single H200 GPU, including non-GPU overhead. We compare mainly against learning-free PTQ baselines: GPTQ~\cite{frantar2022gptq}, GPTAQ~\cite{ligptaq}, and LDLQ~\cite{chee2023quip}. Depending on the setting, we also include AWQ~\cite{lin2024awq}, OmniQuant~\cite{shao2024omniquant}, and GuidedQuant~\cite{kimguidedquant}.

\vspace{-0.05 in}
\subsection{Performance Evaluation}
\label{subsec:exp_main}

\input{files/tab2}

\paragraph{Weight-only quantization.}
\cref{tab:combined-3bit} reports 3-bit weight-only quantization (WOQ) under
per-channel and per-group ($g{=}128$) granularity, and the corresponding wall-clock quantization
time. On LLaMA-3-8B with per-channel quantization, CoreQ reduces Wiki2
perplexity from 24.65 for the best baseline (LDLQ) to 16.48, \textbf{a 33\% reduction}; 4-CoreQ further reduces it to 14.90. The results show two trends. First, the benefits of CoreQ grow with the strictness of the quantization constraint. With per-group quantization, where scales are updated every 128 weights, all methods remain stable but the CoreQ variants still achieve the lowest perplexity. With per-channel quantization, GPTQ and LDLQ both exceed 24 perplexity on LLaMA-3-8B, whereas CoreQ and its beam-search variants remain substantially lower. Second, bounded beam-search variants (4-CoreQ, 8-CoreQ) improve over greedy CoreQ, especially in the per-channel setting, while adding moderate runtime overhead and remaining faster than gradient-based alternatives such as GuidedQuant. 
\vspace{-0.05 in}

\input{files/tab-diff-large}

\paragraph{Evaluation on different architectures and scales.}
Beyond medium-scale LLaMA models, \cref{tab:per-channel-qwen-phi,tab:per-channel-l2-70b} show that CoreQ remains effective on Qwen3-8B, Phi-3.8B, and LLaMA-2-70B. On Qwen3-8B with per-channel quantization, CoreQ reduces Wiki2 perplexity by \textbf{12.4\%} relative to GPTAQ and achieves the lowest C4 perplexity among the compared methods. The same trend holds for Phi3-3.8B, where CoreQ achieves the lowest Wiki2 and C4 perplexities, including an \textbf{8.7\%} Wiki2 reduction over GPTAQ. The benefits are most evident in the extreme low-bit regime. On 2-bit per-channel LLaMA-2-70B, where baseline Wiki2 perplexities exceed 180, CoreQ and \(K\)-CoreQ reduce Wiki2 perplexity by \textbf{48--56\%} relative to the best baseline. Although computing \(\alpha_{\mathrm{corr}}\) adds overhead during quantization, this is a one-time calibration cost and requires no training. At 3 bits, CoreQ also achieves the lowest perplexity on both datasets. Overall, the gains are not limited to a single architecture family or model scale, and are especially pronounced in low-bit settings.

\input{files/tab4}

\paragraph{Weight-activation quantization.}
\label{subsec:exp_preprocess}
We also evaluate the use of CoreQ for weight-refinement in weight--activation--KV cache quantization pipelines. We consider two rotation-based preprocessors: QuaRot~\cite{ashkboos2024quarot}, which applies fixed Hadamard rotations, and SpinQuant~\cite{liu2024spinquant}, which learns rotation matrices to further reduce outliers. For this experiment, calibration uses 128 WikiText2 samples. We apply per-channel symmetric quantization to weights and per-group asymmetric quantization to activations and KV caches. As shown in \cref{tab:spinquant}, CoreQ improves perplexity over learning-free baselines under both preprocessors at comparable runtimes. OmniQuant, which learns quantization parameters, requires longer quantization time and achieves higher perplexity in this setting. These results show that CoreQ is not restricted to weight-only quantization and remains effective in pipelines that also quantize activations and KV cache. Additional outlier-aware weight-only results compared to AWQ and OmniQuant are provided in \cref{tab:outlier-woq} in the Appendix.


\input{files/beam_figure}

\subsection{Performance-cost trade-off of $K$-CoreQ}
\label{subsec:exp-beam}


Here we evaluate $K$-CoreQ, the bounded beam search extension of CoreQ. $K$-CoreQ maintains multiple partial rounding assignments, adding controlled compute and memory to improve the triangular least-squares rounding objective. Figure~\ref{fig:beam_sweep} illustrates this trade-off for 3-bit quantization by plotting perplexity against wall-clock time and peak GPU memory. 
Increasing $K$ monotonically improves the triangular rounding objective relative to greedy CoreQ in \cref{fig:beam_ablation_main} (right). In our experiments, this tighter optimization translates to lower perplexity. Larger beams improve perplexity with modest search-time overhead under batching and increased peak memory due to the expanded beam state. Overall, beam search closes a substantial fraction of the remaining quality gap, suggesting that limited lookahead can reduce errors from early greedy decisions.

\vspace{-0.12 in}
\paragraph{Comparison of rounding algorithms.}
To isolate the effect of the discrete search procedure, we compare $K$-CoreQ with cyclic coordinate descent (CD) refinement \cite{behdin2023quantease,chee2023quip}, a common PTQ post-processing step. Figure~\ref{fig:search_tradeoff} plots perplexity versus search time as we increase either the beam width $K$ or the number of CD passes $p$. Increasing $K$ results in larger perplexity reductions with small runtime overhead, whereas additional CD passes increase search time to achieve smaller gains. Figure~\ref{fig:search_efficiency} summarizes this trend by reporting the mean marginal perplexity reduction per second for each incremental update. The results indicate that, in the considered settings, bounded beam search uses the search budget more effectively than cyclic coordinate refinement.

\paragraph{Ablation of calibration and rounding mechanisms.}
\input{files/fig-ablation-beam}

We isolate the two components of CoreQ by jointly sweeping the per-layer correction coefficient
$\alpha \in \{0, 0.25, 0.5, 0.75, 1, \alpha_{\mathrm{corr}}\}$ and the beam width $K \in \{1, 2, 4, 8\}$ for 3-bit per-channel quantization of LLaMA-2-7B (\cref{fig:beam_ablation_main}, left and center). These two axes correspond to the two decision stages in layer-wise PTQ: choosing the continuous calibration target and rounding the weights to the discrete grid. The ablation shows that both components matter. For a fixed $\alpha$, increasing $K$ improves over greedy rounding and reduces perplexity. For a fixed $K$, $\alpha_{\mathrm{corr}}$ yields lower perplexity than the tested constant coefficients, indicating that the closed-form coefficient selects a better calibration target than a uniform correction strength. The best configuration combines $\alpha_{\mathrm{corr}}$ with $K{=}8$, showing that adaptive calibration and bounded search provide complementary performance improvements.

%% file: files/tab2.tex
\begin{table*}[t]
\centering
\caption{3-bit symmetric WOQ for LLaMA-2/3 models. We report Wiki2/C4 perplexity ($\downarrow$), average zero-shot accuracy ($\uparrow$), and quantization time under per-channel and per-group ($g=128$) granularity, averaged over 5 random seeds. Bold entries are the best result and values within one standard deviation of the best in each setting. GuidedQ is omitted for per-group setting following the official implementation.
}
\label{tab:combined-3bit}
\vspace{0.01 in}
\resizebox{\linewidth}{!}{
\begin{tabular}{l|l||ccc|c||ccc|c}
\toprule
& & \multicolumn{4}{c||}{Per-channel} & \multicolumn{4}{c}{Per-group (g128)} \\
Model & Method & Wiki2 ($\downarrow$) & C4 ($\downarrow$) & Avg.Acc ($\uparrow$) & Time (s) & Wiki2 ($\downarrow$) & C4 ($\downarrow$) & Avg.Acc ($\uparrow$) & Time (s) \\
\hline\hline
\multirow{8}{*}{L3-8B}
& FP16    & 6.14 & 8.93 & 74.31 & & 6.14 & 8.93 & 74.31 & \\
\cline{2-10}
& GPTQ    & 24.80 $\pm$ 3.20 & 23.59 $\pm$ 1.77 & 51.68 $\pm$ 1.21 & 550.1 & 9.87 $\pm$ 0.28 & 12.74 $\pm$ 0.40 & 66.82 $\pm$ 0.65 & 547.5 \\
& GPTAQ   & 33.47 $\pm$ 12.40 & 22.00 $\pm$ 3.10 & 51.61 $\pm$ 1.80 & 701.4 & 9.07 $\pm$ 0.33 & 12.15 $\pm$ 0.13 & 65.77 $\pm$ 1.28 & 715.0 \\
& LDLQ    & 24.65 $\pm$ 4.79 & 23.16 $\pm$ 1.92 & 51.02 $\pm$ 1.69 & 457.4 & 14.99 $\pm$ 2.85 & 14.12 $\pm$ 1.47 & 65.20 $\pm$ 2.31 & 472.5 \\
& GuidedQ & 49.79 $\pm$ 10.56 & 40.03 $\pm$ 6.55 & 48.58 $\pm$ 0.96 & 2110.1 & --- & --- & --- & --- \\
\rowcolor{blue!8}\cellcolor{white}{}& \textbf{{CoreQ}} & 16.48 $\pm$ 2.04 & \textbf{17.13} $\pm$ 0.35 & 53.99 $\pm$ 1.15 & 564.1 & \textbf{8.42} $\pm$ 0.16 & {11.79} $\pm$ 0.19 & \textbf{68.92} $\pm$ 0.58 & 575.2 \\
\rowcolor{blue!8}\cellcolor{white}{}& \textbf{{4-CoreQ}} & \textbf{14.90} $\pm$ 0.28 & \textbf{16.98} $\pm$ 0.25 & \textbf{55.74} $\pm$ 1.01 & 821.5 & 8.60 $\pm$ 0.51 & \textbf{11.57} $\pm$ 0.10 & {68.11} $\pm$ 1.65 & 784.0 \\
\rowcolor{blue!8}\cellcolor{white}{}& \textbf{{8-CoreQ}} & 15.85 $\pm$ 1.29 & \textbf{16.99} $\pm$ 0.39 & \textbf{55.24} $\pm$ 2.54 & 831.6 & \textbf{8.47} $\pm$ 0.35 & \textbf{11.66} $\pm$ 0.25 & \textbf{68.70} $\pm$ 1.46 & 855.1 \\
\hline
\multirow{8}{*}{L2-7B}
& FP16    & 5.47 & 6.97 & 70.47 & & 5.47 & 6.97 & 70.47 & \\
\cline{2-10}
& GPTQ    & 9.36 $\pm$ 0.24 & 10.71 $\pm$ 0.09 & 59.41 $\pm$ 0.78 & 461.5 & 6.75 $\pm$ 0.06 & 8.26 $\pm$ 0.01 & 66.14 $\pm$ 0.34 & 467.9 \\
& GPTAQ   & 8.84 $\pm$ 0.23 & 10.04 $\pm$ 0.06 & 60.91 $\pm$ 0.52 & 578.4 & 6.57 $\pm$ 0.03 & 8.04 $\pm$ 0.01 & 66.34 $\pm$ 0.51 & 590.4 \\
& LDLQ    & 9.42 $\pm$ 0.17 & 10.65 $\pm$ 0.11 & 60.08 $\pm$ 0.54 & 401.8 & 6.64 $\pm$ 0.10 & 8.12 $\pm$ 0.05 & 66.50 $\pm$ 0.57 & 427.8 \\
& GuidedQ & 9.01 $\pm$ 0.25 & 10.26 $\pm$ 0.07 & \textbf{61.83} $\pm$ 0.48 & 1883.0 & --- & --- & --- & --- \\
\rowcolor{blue!8}\cellcolor{white}{}& \textbf{{CoreQ}} & 8.41 $\pm$ 0.21 & 9.61 $\pm$ 0.12 & 61.46 $\pm$ 0.50 & 467.8 & {6.39} $\pm$ 0.03 & {7.86} $\pm$ 0.02 & \textbf{67.16} $\pm$ 0.34 & 470.7 \\
\rowcolor{blue!8}\cellcolor{white}{}& \textbf{{4-CoreQ}} & \textbf{8.03} $\pm$ 0.14 & 9.41 $\pm$ 0.05 & \textbf{62.64} $\pm$ 0.89 & 667.6 & \textbf{6.31} $\pm$ 0.02 & \textbf{7.82} $\pm$ 0.01 & \textbf{67.12} $\pm$ 0.24 & 701.2 \\
\rowcolor{blue!8}\cellcolor{white}{}& \textbf{{8-CoreQ}} & \textbf{7.97} $\pm$ 0.07 & \textbf{9.37} $\pm$ 0.01 & \textbf{62.53} $\pm$ 0.52 & 684.6 & \textbf{6.31} $\pm$ 0.02 & \textbf{7.83} $\pm$ 0.02 & \textbf{67.05} $\pm$ 0.36 & 709.4 \\
\hline
\multirow{8}{*}{L2-13B}
& FP16    & 4.88 & 6.47 & 73.23 & & 4.88 & 6.47 & 73.23 & \\
\cline{2-10} 
& GPTQ    & 6.90 $\pm$ 0.04 & 8.36 $\pm$ 0.02 & 66.26 $\pm$ 0.27 & 763.7 & 5.60 $\pm$ 0.03 & 7.16 $\pm$ 0.01 & \textbf{70.93} $\pm$ 0.65 & 820.5 \\
& GPTAQ   & 6.70 $\pm$ 0.04 & 8.20 $\pm$ 0.02 & 66.83 $\pm$ 0.35 & 1015.8 & {5.54} $\pm$ 0.01 & 7.10 $\pm$ 0.00 & \textbf{71.00} $\pm$ 0.11 & 1012.0 \\
& LDLQ    & 6.87 $\pm$ 0.05 & 8.32 $\pm$ 0.02 & 66.97 $\pm$ 0.43 & 704.2 & 5.57 $\pm$ 0.02 & 7.12 $\pm$ 0.00 & \textbf{70.97} $\pm$ 0.36 & 719.2 \\
& GuidedQ & 6.94 $\pm$ 0.02 & 8.37 $\pm$ 0.02 & 67.18 $\pm$ 0.26 & 3170.4 & --- & --- & --- & --- \\
\rowcolor{blue!8}\cellcolor{white}{}& \textbf{{CoreQ}} & 6.78 $\pm$ 0.29 & 8.05 $\pm$ 0.10 & 67.03 $\pm$ 0.85 & 840.1 & {5.53} $\pm$ 0.04 & {7.06} $\pm$ 0.03 & \textbf{71.19} $\pm$ 0.56 & 824.3 \\
\rowcolor{blue!8}\cellcolor{white}{}& \textbf{{4-CoreQ}} & 6.72 $\pm$ 0.18 & \textbf{8.01} $\pm$ 0.10 & 67.15 $\pm$ 0.99 & 1144.0 & \textbf{5.51} $\pm$ 0.03 & \textbf{7.04} $\pm$ 0.01 & \textbf{70.87} $\pm$ 0.27 & 1189.5 \\
\rowcolor{blue!8}\cellcolor{white}{}& \textbf{{8-CoreQ}} & \textbf{6.53} $\pm$ 0.07 & \textbf{7.98} $\pm$ 0.05 & \textbf{67.86} $\pm$ 0.58 & 1199.2 & \textbf{5.50} $\pm$ 0.02 & \textbf{7.04} $\pm$ 0.01 & \textbf{71.03} $\pm$ 0.30 & 1242.0 \\
\bottomrule
\end{tabular}
}
\end{table*}

%% file: files/tab-diff-large.tex
\begin{table}[t]
\begin{minipage}{0.5\linewidth}
\centering
\caption{3-bit per-channel WOQ on Qwen3-8B and Phi3-3.8B. Extended results are provided in \cref{tab:qwen-phi}.}
\vspace{0.05 in}
\label{tab:per-channel-qwen-phi}
\resizebox{\linewidth}{!}{
\begin{tabular}{l|l||cc|c}
\toprule
Model & Method & Wiki2 ($\downarrow$) & C4 ($\downarrow$) & Q.Time (s) \\
\hline\hline
\multirow{5}{*}{Qwen3-8B} & FP16    & 9.73 & 14.65 & -- \\
\cline{2-5}
 & GPTQ    & 18.75 $\pm$ 0.45 & 19.69 $\pm$ 0.12 & 499.2 \\
 & GPTAQ   & 17.63 $\pm$ 0.59 & 19.08 $\pm$ 0.28 & 647.2 \\
 & LDLQ    & 16.18 $\pm$ 0.18 & 18.49 $\pm$ 0.08 & 375.4 \\
\rowcolor{blue!8}\cellcolor{white}{} & {\textbf{CoreQ}} & \textbf{15.44} $\pm$ 0.23 & \textbf{17.51} $\pm$ 0.08 & 553.9 \\
\hline
\multirow{5}{*}{Phi3-3.8B} & FP16    & 6.01 & 9.11 & -- \\
\cline{2-5}
 & GPTQ    & 15.46 $\pm$ 0.50 & 15.80 $\pm$ 0.05 & 266.5 \\
 & GPTAQ   & 12.99 $\pm$ 0.28 & 14.21 $\pm$ 0.15 & 335.7 \\
 & LDLQ    & 15.75 $\pm$ 0.74 & 16.03 $\pm$ 0.48 & 216.1 \\
\rowcolor{blue!8}\cellcolor{white}{} & {\textbf{CoreQ}} & \textbf{11.86} $\pm$ 0.34 & \textbf{13.68} $\pm$ 0.32 & 302.1 \\
\bottomrule
\end{tabular}
}
\end{minipage}
\hfill
\begin{minipage}{0.46\linewidth}
\centering
\caption{2/3-bit per-channel WOQ on L2-70B. Extended results are provided in \cref{tab:l2-70b}.}
\vspace{0.05 in}
\label{tab:per-channel-l2-70b}
\resizebox{\linewidth}{!}{
\begin{tabular}{c|l||cc|c}
\toprule
Bits & Method & Wiki2 ($\downarrow$) & C4 ($\downarrow$) & Q.Time (s) \\
\hline\hline
16 & FP16 & 3.32 & 5.52 & -- \\
\hline
\multirow{5}{*}{2}
 & GPTQ   & 181.31 $\pm$ 19.81 & 83.17 $\pm$ 5.14 & 3301.2 \\
 & GPTAQ  & 251.82 $\pm$ 80.72 & 83.59 $\pm$ 11.38 & 4663.4 \\
 & LDLQ   & 193.51 $\pm$ 36.21 & 91.32 $\pm$ 10.08 & 2753.5 \\
\rowcolor{blue!8}\cellcolor{white}{} & \textbf{CoreQ} & 94.97 $\pm$ 12.49 & \textbf{38.46} $\pm$ 2.37 & 4731.9 \\
\rowcolor{blue!8}\cellcolor{white}{}  & \textbf{4-CoreQ} & \textbf{91.68} $\pm$ 11.59 & \textbf{39.20} $\pm$ 4.62 & 6340.9 \\
\rowcolor{blue!8}\cellcolor{white}{}  & \textbf{8-CoreQ} & \textbf{79.07} $\pm$ 14.91 & \textbf{37.07} $\pm$ 2.54 & 7973.7 \\
\hline
\multirow{5}{*}{3}
 & GPTQ   & 5.11 $\pm$ 0.02 & 6.77 $\pm$ 0.02 & 3256.0 \\
 & GPTAQ  & 5.09 $\pm$ 0.05 & 6.73 $\pm$ 0.01 & 4727.5  \\
 & LDLQ   & 5.06 $\pm$ 0.02 & 6.73 $\pm$ 0.02 & 2855.0  \\
\rowcolor{blue!8}\cellcolor{white}{}  & \textbf{CoreQ} & \textbf{4.83} $\pm$ 0.03 & {6.52} $\pm$ 0.01 & 4785.1 \\
\rowcolor{blue!8}\cellcolor{white}{}  & \textbf{4-CoreQ} & \textbf{4.83} $\pm$ 0.02 & 6.52 $\pm$ 0.01 & 6264.6 \\
\rowcolor{blue!8}\cellcolor{white}{}  & \textbf{8-CoreQ} & \textbf{4.82} $\pm$ 0.01 & \textbf{6.51} $\pm$ 0.00 & 8020.5 \\
\bottomrule
\end{tabular}
}
\end{minipage}
\vspace{-0.05 in}
\end{table}

%% file: files/tab4.tex
\begin{table}[t]
\centering
\caption{Weight--activation--KV-cache quantization. We report Wiki2 perplexity and quantization time for W4A4KV4 and W4A4KV16. Best results and values within one standard deviation of the best are highlighted. OmniQuant is omitted for L3-8B due to perplexity above $100$.}
\vspace{0.05 in}
\label{tab:spinquant}
\setlength{\tabcolsep}{3pt}
\resizebox{\linewidth}{!}{
    \begin{tabular}{l||cc|cc|cc||cc|cc|cc}
    \toprule
    & \multicolumn{6}{c||}{\textbf{W4A4KV4}}
    & \multicolumn{6}{c}{\textbf{W4A4KV16}} \\
    \multirow{2}{*}{Methods}
        & \multicolumn{2}{c|}{L3-8B} & \multicolumn{2}{c|}{L2-7B} & \multicolumn{2}{c||}{L2-13B}
        & \multicolumn{2}{c|}{L3-8B} & \multicolumn{2}{c|}{L2-7B} & \multicolumn{2}{c}{L2-13B} \\
      & Wiki2 & T (s) & Wiki2 & T (s) & Wiki2 & T (s)
      & Wiki2 & T (s) & Wiki2 & T (s) & Wiki2 & T (s) \\
    \hline\hline
    OmniQuant
      & -- & -- & 15.12 $\pm$ 0.21 & 3848 & 11.82 $\pm$ 0.06 & 6654
      & -- & -- & 12.21 $\pm$ 0.09 & 3640 & 9.95 $\pm$ 0.12 & 6337 \\
    \hline
    QuaRot+GPTQ
      & 8.83 $\pm$ 0.19 & 811 & 6.21 $\pm$ 0.02 & 1445 & 5.48 $\pm$ 0.02 & 2131
      & 7.86 $\pm$ 0.06 & 793 & 6.08 $\pm$ 0.02 & 1317 & 5.35 $\pm$ 0.01 & 1977 \\
    QuaRot+GPTAQ
      & 8.08 $\pm$ 0.02 & 954 & 5.97 $\pm$ 0.00 & 710  & 5.30 $\pm$ 0.01 & 1725
      & 7.37 $\pm$ 0.02 & 918 & 5.87 $\pm$ 0.01 & 720  & 5.17 $\pm$ 0.01 & 2157 \\
    \rowcolor{blue!8} \textbf{QuaRot+{CoreQ}}
      & \textbf{7.96} $\pm$ 0.02 & 892 & \textbf{5.93} $\pm$ 0.01 & 1420 & \textbf{5.26} $\pm$ 0.00 & 2050
      & \textbf{7.20} $\pm$ 0.01 & 885 & \textbf{5.82} $\pm$ 0.00 & 1366 & \textbf{5.14} $\pm$ 0.00 & 2068 \\
    \hline
    SpinQuant+GPTQ
      & 7.37 $\pm$ 0.01 & 577 & 5.95 $\pm$ 0.01 & 521 & 5.24 $\pm$ 0.01 & 878
      & 7.28 $\pm$ 0.01 & 573 & 5.89 $\pm$ 0.01 & 523 & 5.21 $\pm$ 0.01 & 829 \\
    SpinQuant+GPTAQ
      & 7.30 $\pm$ 0.00 & 844 & 5.90 $\pm$ 0.01 & 753 & 5.22 $\pm$ 0.00 & 1121
      & 7.20 $\pm$ 0.01 & 800 & 5.85 $\pm$ 0.00 & 710 & 5.19 $\pm$ 0.01 & 1123 \\
    \rowcolor{blue!8} \textbf{SpinQuant+{CoreQ}}
      & \textbf{7.26} $\pm$ 0.01 & 683 & \textbf{5.88} $\pm$ 0.00 & 574 & \textbf{5.21} $\pm$ 0.00 & 1151
      & \textbf{7.16} $\pm$ 0.01 & 690 & \textbf{5.82} $\pm$ 0.01 & 576 & \textbf{5.15} $\pm$ 0.00 & 995 \\
    \bottomrule
    \end{tabular}
}
\vspace{-0.1 in}
\end{table}

%% file: files/beam_figure.tex
\begin{figure}[t]
    \centering
    \begin{subfigure}[c]{0.67\linewidth}
        \centering
        \includegraphics[width=1.02\linewidth]{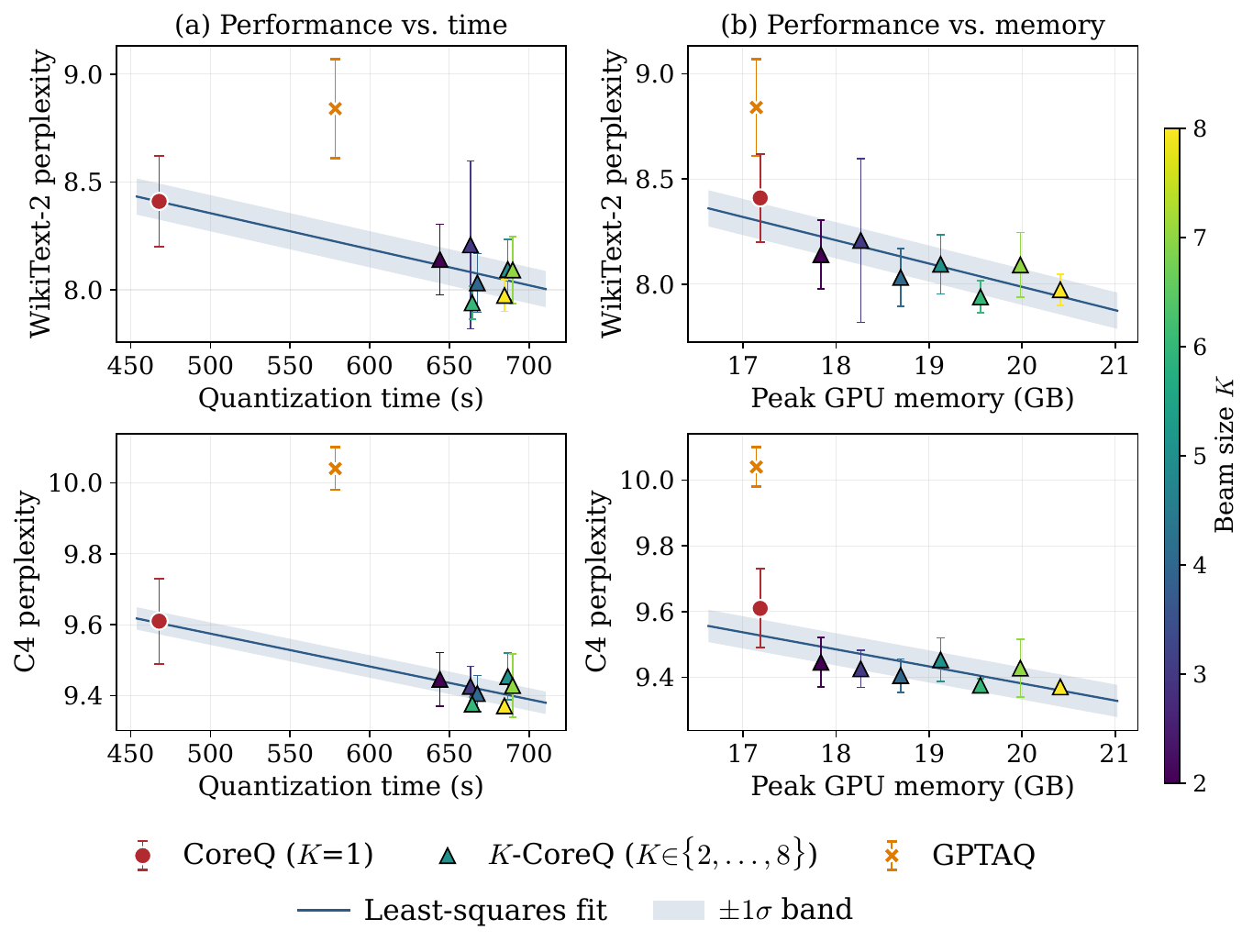}
        \vspace{-0.27 in}
        \caption{}
        \label{fig:beam_sweep}
    \end{subfigure}
    \hfill
    \begin{subfigure}[c]{0.3\linewidth}
        \centering
        \includegraphics[width=\linewidth]{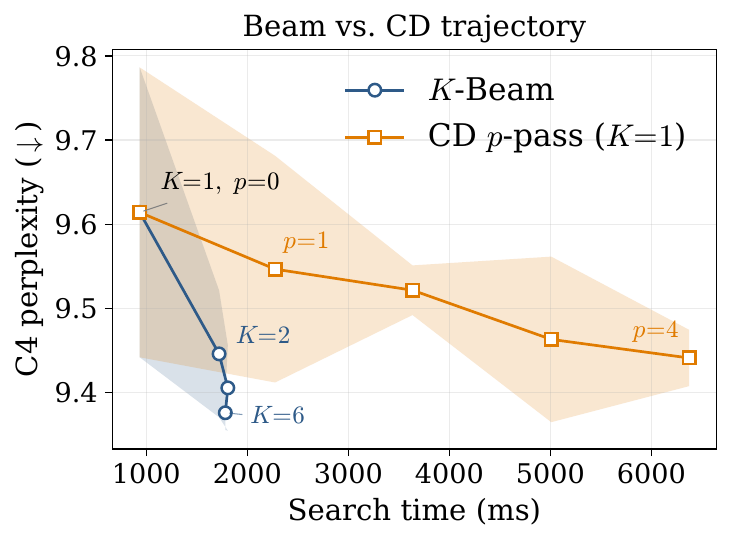}
        \vspace{-0.16 in}
        \caption{}
        \label{fig:search_tradeoff}
        \includegraphics[width=\linewidth]{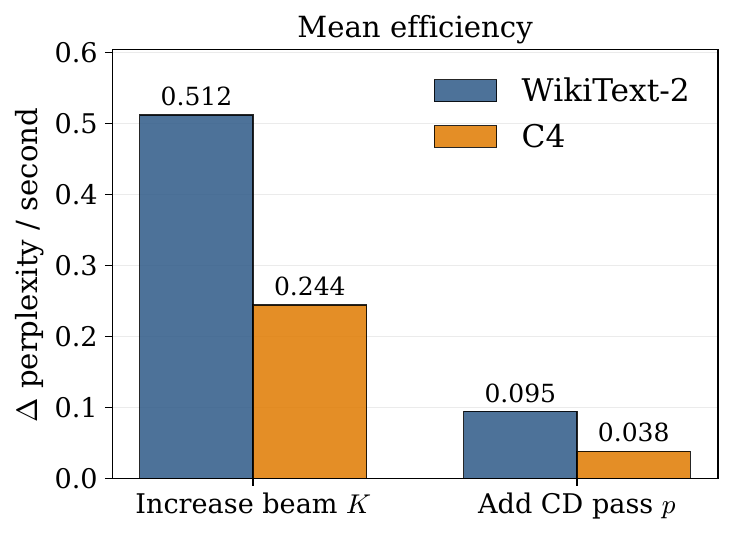}
        \vspace{-0.15 in}
        \caption{}
        \label{fig:search_efficiency}
    \end{subfigure}
    \vspace{-0.1 in}
    \caption{(a) Performance--efficiency trade-offs of $K$-CoreQ for 3-bit L2-7B. WikiText-2 and C4 perplexity are plotted vs. quantization time (left column) and peak GPU memory (right column) as beam width $K$ increases. (b) C4 perplexity vs. search time for increasing beam width $K$ and additional coordinate-descent (CD) passes ($\pm 1$ std. shaded). (c) Mean marginal perplexity improvement per second: beam expansion vs. CD passes.}
    \label{fig:beam_and_rounding}
\end{figure}

%% file: files/fig-ablation-beam.tex
\begin{figure}[t]
    \centering
    \begin{subfigure}[c]{0.32\linewidth}
        \centering
        \includegraphics[width=1.00\linewidth]{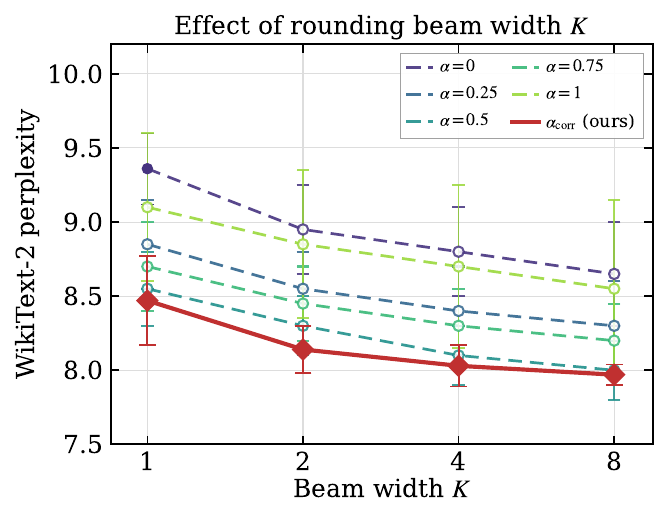}
    \end{subfigure}
    \hfill
    \begin{subfigure}[c]{0.32\linewidth}
        \centering
        \includegraphics[width=1.00\linewidth]{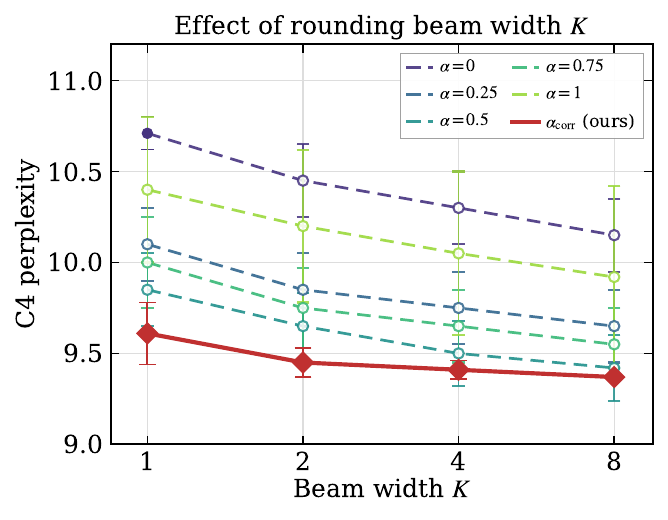}
    \end{subfigure}
    \hfill
    \begin{subfigure}[c]{0.32\linewidth}
        \centering
        \includegraphics[width=1.00\linewidth]{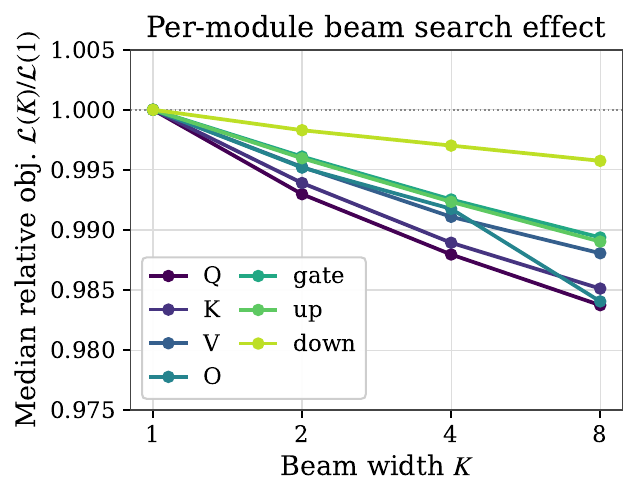}
    \end{subfigure}
    \caption{Effect of beam width $K$ on quantization quality for 3-bit per-channel LLaMA-2-7B.
    \textbf{Left, middle:} WikiText-2 and C4 perplexity across correction strengths $\alpha \in \{0,0.25,0.5,0.75,1\}$ and $\alpha_{\mathrm{corr}}$, reported as
    mean $\pm$ std over 5 seeds. $\alpha_{\mathrm{corr}}$ outperforms every fixed
    $\alpha$ at every $K$, while increasing $K$ reduces perplexity for each $\alpha$.
    \textbf{Right:} reduction of the triangular rounding objective under $\alpha=0$.}
    \label{fig:beam_ablation_main}
    \vspace{-0.15 in}
\end{figure}

%% file: main/related_work.tex
\vspace{-0.05 in}
\section{Related Work}
\label{sec:related}

\vspace{-0.05 in}
\subsection{Layer-wise Calibration Objectives for PTQ}
\label{subsec:rw-objective}
A widely used family of PTQ methods minimizes a Hessian-based calibration objective, often called a layer-wise reconstruction error
\cite{nagel2020up, hubara2021accurate, li2021brecq, frantar2022obq,
frantar2022gptq}. GPTQ is the standard LLM-scale realization of this approach. Several extensions incorporate end-to-end information through end-loss
gradients, KL-based Hessian approximations, or output cross-entropy
distortion \cite{kimguidedquant, tseng2025model, edalati2025oac}, but at substantially higher computational cost. More recent methods instead address upstream quantization errors by redefining the layer-wise calibration objective. GPTAQ \cite{ligptaq} targets the observed mismatch by matching the full-precision layer output on full-precision activations to the quantized layer output on activations generated by the quantized prefix. Similarly, QEP \cite{arai2026qep} introduces a manually tuned coefficient that interpolates between standard Hessian-based objective and full mismatch-aware calibration. Other methods address error propagation via learning-based adapters \cite{liao2024apiq} or joint block quantization \cite{ding2023cbq}. Unlike methods that apply full correction ($\alpha = 1$) or use a fixed
global $\alpha$, CoreQ computes a closed-form per-layer correction coefficient $\alpha_{\mathrm{corr}}$ from calibration statistics alone. Learning-based methods such as OmniQuant and LRQ optimize auxiliary parameters by gradient descent
\cite{shao2024omniquant, lee2025lrq}, trading additional compute for accuracy. An extended discussion is provided in Appendix~\ref{app:extended-related-work}.

\vspace{-0.05in}
\subsection{Discrete Rounding and Search Algorithms}
\label{subsec:rw-rounding}
Given a Hessian-based quadratic objective, PTQ reduces to discrete optimization over low-bit weights. GPTQ and its equivalent LDLQ realization are widely used scalar, column-wise greedy rounding solvers \cite{frantar2022gptq, chee2023quip}. Subsequent work improves rounding quality in two main ways: iterative refinement with coordinate-descent passes which trade additional objective evaluations for lower quantization error \cite{behdin2023quantease, nair2024cdquant}, and refined error compensation which accounts for first-order effects of quantization error during column-wise updates \cite{zheng2025foem}. A complementary geometric view interprets GPTQ as Babai's nearest-plane method for the closest vector problem, motivating lookahead-based search beyond the greedy path \cite{chen2025geometry}. A separate family of vector quantization methods, including AQLM, QuIP\#, and GPTVQ, shifts the discrete problem to codebook decoding and applies beam search over codeword assignments \cite{egiazarian2024aqlm, tseng2024quip, van2024gptvq}. Our beam-search extension, $K$-CoreQ, instead performs search over the successive column-wise rounding decisions induced by the triangular least-squares proxy. Rather than committing to one greedy path, as done by Babai's algorithm, $K$-CoreQ maintains $K$ candidate paths through a bounded tree search, leading to a better accuracy-runtime trade-off than coordinate-descent refinement within the same Hessian-based objective. 

%% file: main/conclusion.tex
\section{Conclusion}

We presented CoreQ, a learning-free PTQ framework that improves layer-wise quantization under limited calibration data. CoreQ selects a corrected calibration target using a closed-form, layer-dependent correction coefficient, avoiding gradient updates, validation search, and tuning. This makes CoreQ a practical drop-in alternative to standard Hessian-based PTQ solvers: CoreQ serves as an efficient default, while \(K\)-CoreQ provides a search-enhanced option when additional calibration compute is available. Experiments show that adaptive mismatch correction consistently improves low-bit quantization, suggesting CoreQ as a strong default strategy for efficient LLM deployment.

\textbf{Limitations and Future Work.}
While CoreQ improves over prior layer-wise PTQ methods, it remains sequential and layer-wise: the corrected calibration target accounts for errors from previously quantized layers, but CoreQ does not jointly optimize multiple layers. CoreQ also restricts calibration target selection to linear interpolation between the standard and mismatch-aware calibration targets. Efficient nonlinear calibration target corrections for more complex mismatches is an interesting direction for future work.

%% file: appendix/appendix.tex
\input{appendix/new_theory}

\section{Rounding Algorithms}

\paragraph{Notation.}
In this appendix, bold uppercase symbols denote matrices. In particular,
\(\mathbf{W}\in\mathbb{R}^{m\times n}\) denotes the full-precision weight,
\(\mathbf{W}_{\mathrm{corr}} := \mathbf{W} + \alpha_{\mathrm{corr}}\widetilde{\mathbf{S}}\)
denotes the mismatch-corrected continuous weight (or rounding center),
\(\mathbf{Q}\in\mathbb{R}^{m\times n}\) denotes the quantized weight,
\(\mathbf{H}\in\mathbb{R}^{n\times n}\) denotes the Hessian proxy, and
\(\mathbf{L}\) its Cholesky factor. We use
\(\mathbf{Q}_{:,j}\) for column \(j\), \(\mathbf{Q}_{i,:}\) for row \(i\),
and keep scalar coordinates such as \(q_j\), \(c_j\), and \(\Delta_j\)
non-bold.

We provide pseudocode for CoreQ and \(K\)-CoreQ, together with practical implementation details that bridge the objective reformulation and search procedures used in our codebase. Specifically, our algorithm for computing \(\alpha_{\mathrm{corr}}\) is provided in \cref{alg:alpha-corr}. \cref{alg:coreq-lazy-batch} introduces CoreQ algorithm with lazy-batch implementation \cite{frantar2022gptq}. Building on the same triangular proxy, \cref{alg:kcoreq-lazy-batch} extends CoreQ to a \(K\)-best beam search that maintains multiple candidates per output row and enables a bounded-complexity improvement over greedy rounding.

To make the connection explicit, we first provide a matrix-level interpretation that shows how the Cholesky-triangular form yields a columnwise branch-metric decomposition with an interference-cancelled center (\cref{prop:sphere_obj_decomp}), which directly motivates successive rounding and its beam-search generalization. We then detail our beam-search implementation.

\subsection{Matrix-level interpretation of objective reformulation and successive rounding}
\label{app:rounding-detail}

\begin{proposition}[Columnwise decomposition of the triangular proxy]
\label{prop:sphere_obj_decomp}
Let \(\mathbf{H}\in\mathbb{R}^{n\times n}\) be SPD and admit a Cholesky factorization
\(\mathbf{H}=\mathbf{L}\mathbf{L}^\top\), where \(\mathbf{L}\) is lower triangular and
\(\mathbf{L}_{j,j}>0\). Define
\(\mathbf{E}:=\mathbf{Q}-\mathbf{W}_{\mathrm{corr}}\in\mathbb{R}^{m\times n}\).
Given the calibration objective
\[
    \mathcal{L}(\mathbf{Q})
    =
    \|(\mathbf{Q}-\mathbf{W}_{\mathrm{corr}})\mathbf{L}\|_F^2
    =
    \|\mathbf{E}\mathbf{L}\|_F^2,
\]
we have
\begin{equation}
\mathcal{L}(\mathbf{Q})
=
\sum_{j=1}^n
\mathbf{L}_{j,j}^2
\left\|
\mathbf{E}_{:,j}
+
\sum_{k=j+1}^n
\mathbf{E}_{:,k}
\frac{\mathbf{L}_{k,j}}{\mathbf{L}_{j,j}}
\right\|_2^2 .
\label{eq:tri_metric_columnwise}
\end{equation}
Equivalently, defining
\begin{equation}
\widetilde{\mathbf{W}}_{:,j}
:=
\mathbf{W}_{\mathrm{corr},\,:,j}
+
\sum_{k=j+1}^n
(\mathbf{W}_{\mathrm{corr},\,:,k}-\mathbf{Q}_{:,k})
\frac{\mathbf{L}_{k,j}}{\mathbf{L}_{j,j}},
\label{eq:tildeW_def}
\end{equation}
we have the diagonalized form
\begin{equation}
\mathcal{L}(\mathbf{Q})
=
\sum_{j=1}^n
\mathbf{L}_{j,j}^2
\,
\|\widetilde{\mathbf{W}}_{:,j}-\mathbf{Q}_{:,j}\|_2^2 .
\label{eq:sphere_obj}
\end{equation}
\end{proposition}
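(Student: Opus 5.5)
The plan is a direct column-by-column expansion of the product \(\mathbf{E}\mathbf{L}\). The Frobenius norm squared of a matrix is the sum of the squared Euclidean norms of its columns. So I would first write
\[
\mathcal{L}(\mathbf{Q})=\|\mathbf{E}\mathbf{L}\|_F^2=\sum_{j=1}^n\|(\mathbf{E}\mathbf{L})_{:,j}\|_2^2 .
\]
Next I compute the \(j\)-th column of \(\mathbf{E}\mathbf{L}\). By definition it is \((\mathbf{E}\mathbf{L})_{:,j}=\sum_{k=1}^n \mathbf{E}_{:,k}\mathbf{L}_{k,j}\). Because \(\mathbf{L}\) is lower triangular, \(\mathbf{L}_{k,j}=0\) whenever \(k<j\). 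The sum therefore reduces to \(\mathbf{E}_{:,j}\mathbf{L}_{j,j}+\sum_{k=j+1}^n\mathbf{E}_{:,k}\mathbf{L}_{k,j}\).

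Since \(\mathbf{L}_{j,j}>0\), I can factor it out:
\[
(\mathbf{E}\mathbf{L})_{:,j}=\mathbf{L}_{j,j}\Big(\mathbf{E}_{:,j}+\sum_{k=j+1}^n\mathbf{E}_{:,k}\tfrac{\mathbf{L}_{k,j}}{\mathbf{L}_{j,j}}\Big).
\]
Taking squared norms and summing over \(j\) then gives \eqref{eq:tri_metric_columnwise} immediately.

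For the diagonalized form, I substitute \(\mathbf{E}=\mathbf{Q}-\mathbf{W}_{\mathrm{corr}}\) into the bracketed vector. This gives
\[
\mathbf{Q}_{:,j}-\mathbf{W}_{\mathrm{corr},:,j}-\sum_{k>j}(\mathbf{W}_{\mathrm{corr},:,k}-\mathbf{Q}_{:,k})\tfrac{\mathbf{L}_{k,j}}{\mathbf{L}_{j,j}}=\mathbf{Q}_{:,j}-\widetilde{\mathbf{W}}_{:,j},
\]
using definition \eqref{eq:tildeW_def}. The sign flip does not change the norm, so \eqref{eq:sphere_obj} follows.

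There is no real obstacle here. The only points needing care are to use the correct triangularity (lower triangular, so only the indices \(k\ge j\) survive in column \(j\)) and to track the sign when rewriting \(\mathbf{E}\) in terms of \(\mathbf{W}_{\mathrm{corr}}-\mathbf{Q}\). It is also worth remarking that \(\widetilde{\mathbf{W}}_{:,j}\) depends only on the suffix \(\mathbf{Q}_{:,j+1:n}\). This dependence is what justifies successive rounding from \(j=n\) down to \(1\), and it is consistent with the row-wise formulation via \(\mathbf{R}=\mathbf{L}^\top\) in \eqref{eq:triangular-defs}.
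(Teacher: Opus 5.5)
Your proof is correct and follows the paper's argument essentially step for step. You expand the $j$-th column of $\mathbf{E}\mathbf{L}$, use lower-triangularity to drop the terms with $k<j$, factor out $\mathbf{L}_{j,j}$, and sum the squared column norms; you then substitute $\mathbf{E}=\mathbf{Q}-\mathbf{W}_{\mathrm{corr}}$ to identify the bracketed vector as $\mathbf{Q}_{:,j}-\widetilde{\mathbf{W}}_{:,j}$, with the sign handled correctly, and your remark that $\widetilde{\mathbf{W}}_{:,j}$ depends only on the suffix $\mathbf{Q}_{:,j+1:n}$ is the same tree-traversal interpretation the paper draws.
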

\cref{eq:sphere_obj} is precisely the branch-metric decomposition that appears in
sphere decoding: \(\mathbf L_{j,j}^2\) acts as a per-level weight, and
\(\widetilde{\mathbf W}_{:,j}\) is the interference-cancelled center for column \(j\)
given already-chosen later columns \(\{\mathbf Q_{:,k}\}_{k>j}\). Thus, choosing
\(\mathbf Q\) corresponds to \emph{traversing a tree} over columns
\(j=n,n-1,\dots,1\), where each decision fixes a column and updates the targets
for earlier columns.

\begin{proof}
Let \(\mathbf e_j\) denote the \(j\)-th standard basis vector in \(\mathbb R^n\).
The \(j\)-th column of \(\mathbf E\mathbf L\) is
\[
(\mathbf E\mathbf L)_{:,j}
=
\mathbf E\mathbf L_{:,j}
=
\sum_{k=1}^n \mathbf E_{:,k}\mathbf L_{k,j}.
\]
Since \(\mathbf L\) is lower triangular, \(\mathbf L_{k,j}=0\) for \(k<j\), hence
\[
(\mathbf E\mathbf L)_{:,j}
=
\sum_{k=j}^n \mathbf E_{:,k}\mathbf L_{k,j}
=
\mathbf L_{j,j}
\left(
\mathbf E_{:,j}
+
\sum_{k=j+1}^n
\mathbf E_{:,k}
\frac{\mathbf L_{k,j}}{\mathbf L_{j,j}}
\right).
\]
Taking squared \(\ell_2\) norms and summing over \(j\) yields
\begin{equation}
\|\mathbf E\mathbf L\|_F^2
=
\sum_{j=1}^n
\|(\mathbf E\mathbf L)_{:,j}\|_2^2
=
\sum_{j=1}^n
\mathbf L_{j,j}^2
\left\|
\mathbf E_{:,j}
+
\sum_{k=j+1}^n
\mathbf E_{:,k}
\frac{\mathbf L_{k,j}}{\mathbf L_{j,j}}
\right\|_2^2,
\end{equation}
which proves \eqref{eq:tri_metric_columnwise}. Substituting
\(\mathbf{E}=\mathbf{Q}-\mathbf{W}_{\mathrm{corr}}\) and rearranging gives
\[
\widetilde{\mathbf{W}}_{:,j}-\mathbf{Q}_{:,j}
=
-
\left(
\mathbf{E}_{:,j}
+
\sum_{k>j}
\mathbf{E}_{:,k}
\frac{\mathbf{L}_{k,j}}{\mathbf{L}_{j,j}}
\right),
\]
and thus \eqref{eq:sphere_obj}.
\end{proof}

\paragraph{Permutation matrix \(\mathbf P\).}

The reformulation in \eqref{eq:sphere_obj} also motivates a column permutation of the weight matrix prior to successive rounding. In \eqref{eq:sphere_obj}, the proxy decomposes into level-wise contributions
\(\mathbf L_{j,j}^2\|\widetilde{\mathbf W}_{:,j}-\mathbf Q_{:,j}\|_2^2\), where the weight \(\mathbf L_{j,j}^2\) reflects the local scaling induced by \(\mathbf H\). Since \(\widetilde{\mathbf W}_{:,j}\) is an interference-cancelled center that depends on the already-fixed columns \(\{\mathbf Q_{:,k}\}_{k>j}\), the overall procedure is order-dependent: decisions made first establish the tail \(\{\mathbf Q_{:,k}\}_{k>j}\) that conditions all subsequent columns.

Accordingly, we define a permutation \(\pi:[n]\rightarrow[n]\) that sorts coordinates as
\[
\mathbf H_{\pi(1),\pi(1)}
\le
\mathbf H_{\pi(2),\pi(2)}
\le
\cdots
\le
\mathbf H_{\pi(n),\pi(n)}.
\]
We then apply this permutation to reorder the columns before running successive rounding with a right-to-left traversal \(j=n,n-1,\dots,1\). Under this convention, the coordinates with larger diagonal curvature \(\mathbf H_{ii}\) are placed closer to the beginning of the decoding order and are therefore quantized first. This permutation is used throughout all of our implementations.

We note that this interpretation is exact only in the diagonal case: if \(\mathbf H\) is diagonal, then its Cholesky factor satisfies
\(
\mathbf L
=
\operatorname{diag}
\left(
\sqrt{\mathbf H_{11}},\ldots,\sqrt{\mathbf H_{nn}}
\right),
\)
so sorting by \(\mathbf H_{ii}\) is equivalent to sorting by the true per-level weights
\(\mathbf L_{j,j}^2\) in \eqref{eq:sphere_obj}. When \(\mathbf H\) is not diagonal, the diagonal entries \(\mathbf H_{ii}\) no longer uniquely determine \(\mathbf L_{j,j}\) due to off-diagonal couplings. Nevertheless, in practice we find that ordering columns by the diagonal proxy \(\mathbf H_{ii}\) yields a consistent and effective decoding order, serving as a simple curvature-based heuristic that remains well aligned with the dominant directions even in the presence of correlations.

\subsection{Beam Search Implementation}
\label{app:beam-implement}

At a given level, or column, \(j=n,\dots,1\), for each row and beam, we form the interference-cancelled center using the triangular couplings. In the columnwise batched form, this corresponds to the \(\widetilde{\mathbf{W}}_{:,j}\) update in \eqref{eq:tildeW_def}. In our code, we compute the same quantity as a beam-dependent tensor \(\widetilde{\mathbf{W}}\in\mathbb{R}^{m\times K}\) via batched matrix multiplications.

Given \(\widetilde{\mathbf{W}}\) for column \(j\), we evaluate candidate quantization levels \(q_j\in\mathcal{Q}_j\) by broadcasting and compute the incremental costs
\(
\boldsymbol{\Delta}_j
=
\mathbf{L}_{j,j}^2
\,
|q_j-\widetilde{\mathbf{W}}|^2
\in
\mathbb{R}^{m\times K\times A}.
\)
We then update scores
\(
\mathbf S'
=
\mathbf S+\boldsymbol{\Delta}_j,
\)
broadcast over the \(A\) candidate levels, flatten the beam-and-choice axis, and keep the best \(K\) per row via \texttt{topk}, letting us update the tail state \(\mathbf Q_{\mathrm{tail}}\). This implements an exact \(K\)-best search under the triangular branch metrics while remaining GPU-friendly. \cref{alg:kcoreq-lazy-batch} shows the \(K\)-CoreQ algorithm with lazy-batch update, and the dimensions of matrices and tensors used are presented in \cref{tab:beam_dims}.

For clarity, we define the three auxiliary operators used in \cref{alg:kcoreq-lazy-batch}. All of them operate row-wise, independently for each output row \(r\in[m]\), since the proxy objective decomposes over rows.

\smallskip
\noindent\textbf{\textsc{TopK}\((\mathbf J,K)\).}
Let \(\mathbf J\in\mathbb R^{m\times K\times A}\) be a tensor of candidate costs, where \(K\) indexes the current beam and \(A\) indexes the discrete quantization levels. For each row \(r\), define the set of all beam--level pairs
\(
\Omega=\{(b,a): b\in[K],\,a\in[A]\},
\)
and the corresponding costs
\(
\rho_r(b,a):=\mathbf J_{r,b,a}.
\)
\textsc{TopK} returns the \(K\) smallest costs and their argmin indices:
\[
\{(b^\star_{r,k},a^\star_{r,k})\}_{k=1}^K
=
\mathrm{KSmallest}
\big(
\{\rho_r(b,a):(b,a)\in\Omega\},K
\big),
\]
and outputs
\[
\mathbf S_{r,k}
\leftarrow
\mathbf J_{r,b^\star_{r,k},a^\star_{r,k}},
\qquad
\bm{parent}_{r,k}
\leftarrow
b^\star_{r,k},
\qquad
\bm{choice}_{r,k}
\leftarrow
a^\star_{r,k}.
\]
Intuitively, at each level we expand every current beam by all \(A\) levels, and keep the best \(K\) expansions per row.

\smallskip
\noindent\textbf{\textsc{Gather}\((\mathbf Z,\bm{parent})\).}
This is the beam reindexing operator: it reorders a beam-dependent state tensor so that each surviving beam \(k\) inherits the state of its selected parent beam \(\bm{parent}_{r,k}\). Formally, if \(\mathbf Z\in\mathbb R^{m\times K\times d}\), or more generally has any trailing shape \(d\), then
\[
\big(\textsc{Gather}(\mathbf Z,\bm{parent})\big)_{r,k,:}
=
\mathbf Z_{r,\bm{parent}_{r,k},:},
\qquad
\forall r\in[m],\,k\in[K].
\]
We apply this to \(\mathbf{Q}_{\mathrm{blk}}\), \(\mathbf{T}\), and the ancestry map \(\bm{\pi}\), so that their beam dimensions remain consistent with the pruned score tensor \(\mathbf{S}\).

\smallskip
\noindent\textbf{\textsc{PickLevel}\((\mathbf V,\bm{choice})\).}
Given candidate quantization values \(\mathbf V\in\mathbb R^{m\times A}\) for the current column and selected level indices \(\bm{choice}\in\mathbb R^{m\times K}\), \textsc{PickLevel} returns the chosen quantized values per row and beam:
\[
\big(\textsc{PickLevel}(\mathbf V,\bm{choice})\big)_{r,k}
=
\mathbf V_{r,\bm{choice}_{r,k}},
\qquad
\forall r\in[m],\,k\in[K].
\]
This is exactly the column update \(\mathbf Q_{\mathrm{blk}}[:,:,j]\) for the surviving beams.

\smallskip
\noindent
In implementation, \textsc{TopK} corresponds to a row-wise \texttt{topk} on the flattened beam--level axis, while \textsc{Gather} and \textsc{PickLevel} are indexed gathers along the beam and level dimensions, respectively.

\input{files/tab-matrix-dim}
\clearpage
\input{files/alg_alpha}
\input{files/lazy-batch-algorithm}
\input{files/algorithm2}
\clearpage

\newpage
\section{Additional Results}
\label{app:ablation}

\subsection{Analysis on coefficient \(\alpha\)}
\label{app:alpha-analysis}

\paragraph{Why a single \(\alpha\) does not suffice.}
A natural hypothesis is that increasing \(\alpha\) --- pushing the calibration
target toward the full-precision activation --- should monotonically reduce
the layerwise mean activation error (MAE)
\(\|\mathbf X_{\mathrm f}^{\ell}-\mathbf X_{\mathrm q}^{\ell}\|\).
\cref{fig:alpha-cal-and-val} (top row) confirms this on the calibration set:
across all three models, larger \(\alpha\) yields smaller MAE, with the effect
concentrated in the deeper layers where upstream errors have accumulated. The
trend does not transfer to held-out data. On the validation set (bottom row),
\(\alpha=0.5\) matches or improves on \(\alpha=1\) at every layer for L2-7B and
L3-8B, and the gap widens with depth. On L2-13B, although \(\alpha=1\) achieves
the lowest calibration MAE in the deepest layers, it diverges from
\(\alpha=0.5\) on the validation set, and the gap widens with depth --- showing
that aggressive correction overfits to calibration-specific mismatch patterns
rather than recovering population-level structure. Yet a constant choice is
itself fragile. The validation MAE curves diverge across \(\alpha\) in a
layer-dependent fashion within each model (\cref{fig:alpha-cal-and-val}), so
even the best per-model constant is suboptimal at different depths.

\input{files/fig-lim-asym}

\paragraph{Empirical behavior of $\alpha_{\mathrm{corr}}$.}
The closed-form $\alpha_{\mathrm{corr}}$ is computed per layer from the calibration batch alone, with no held-out data or grid search.
\cref{fig:alpha_analysis} reports its values across four models from LLaMA-2-7B
to LLaMA-2-70B at 3-bit per-channel quantization. Two structural properties
emerge. 

\textbf{(i)~\(\alpha_{\mathrm{corr}}\) varies systematically across layer types}: for example, the output projection \(\mathbf W_O\) consistently requires the most
aggressive correction (\(\alpha_{\mathrm{corr}}\approx 0.4\)--\(0.6\) in steady
state) and the ordering persists across model family and an order-of-magnitude
scale change. One plausible explanation is that \(\mathbf W_O\), as the final
linear map in the attention branch, sees mismatch after attention aggregation and
head mixing, which often appears as a coherent feature-space shift that can be
absorbed by a linear output projection. In contrast, \(\mathbf W_V\) is applied
before the data-dependent attention mixing; its mismatch is more token- and
context-specific and therefore less well represented by a single linear
correction on \(\mathbf X_{\mathrm q}\). The fact that this ordering persists
across model families and scales suggests that the coefficient captures
layer-specific geometry of the mismatch, rather than random calibration noise.

\textbf{(ii)~\(\alpha_{\mathrm{corr}}\) varies systematically across depth.}
Early blocks transiently approach \(\alpha_{\mathrm{corr}}\approx 1\)
before settling to layer-specific steady points, reflecting the accumulation
of upstream rounding error layer by layer. This is consistent with the
reachability interpretation: near the beginning of the network, the
activation mismatch is induced by only a short quantized prefix and
is often close to a coherent first-order perturbation, making it largely
explainable by a local linear weight shift. As depth increases, upstream
rounding errors pass through attention, MLP, normalization, and residual
connections, making the mismatch more mixed and layer-dependent. The
reachable fraction therefore stabilizes to layer-specific levels rather
than remaining uniformly close to one. This behavior is desirable in PTQ:
once propagated mismatch becomes less locally reachable, full correction
would allow accumulated calibration-specific error to steer the calibration target. The depth-adaptive shrinkage instead corrects only the component
that remains explainable by the current layer and otherwise falls back
toward the standard calibration target.

\input{files/fig-alpha-sweep-llama}

\subsection{Sensitivity to Calibration Data Size}
\label{subsec:cal-size}
\input{files/tab-cal-size}

We evaluate the sensitivity of CoreQ to the number of calibration samples
used to estimate the activation statistics and mismatch correction. Table~\ref{tab:calset-sensitivity} reports results for LLaMA-2-7B under
3-bit per-channel weight-only quantization, using C4 calibration sets of
different sizes. Across all calibration sizes, CoreQ achieves lower mean
perplexity than GPTAQ on both WikiText-2 and C4, and higher average
zero-shot accuracy. The improvement is especially clear for C4 perplexity,
where CoreQ consistently reduces error even with only 16 or 32 calibration
samples. This suggests that the reachable/unreachable decomposition
provides a useful safeguard against fully applying the mismatch
when calibration data are limited. As the calibration set grows, both methods improve, reflecting more stable
activation statistics and mismatch estimates. CoreQ continues to provide
additional gains at 128 samples, indicating that the adaptive correction
does not merely help in extremely data-scarce regimes.

\subsection{Weight-only quantization with outlier reduction}
\input{files/tab-outlier-red}
\cref{tab:outlier-woq} compares CoreQ against two outlier-aware
weight-only baselines: AWQ~\cite{lin2024awq}, which migrates outliers
from activations into weights via per-channel scaling, and
OmniQuant~\cite{shao2024omniquant}, which learns quantization
parameters via gradient descent. Combined with QuaRot~\cite{ashkboos2024quarot}
to handle activation outliers, CoreQ achieves the lowest Wiki2 and C4
perplexity in every cell, for both per-channel and per-group and across all three
models, while running $3$--$5\times$ faster than OmniQuant. The gap
is most pronounced on LLaMA-3-8B per-channel, where CoreQ reduces Wiki2
perplexity from $12.18$ (AWQ) and $15.11$ (OmniQuant) to $7.14$. These results show that CoreQ composes well with outlier-reduction preprocessing and achieves strong accuracy–cost trade-offs in this setting.

\subsection{Quantization results of vision transformers}
\label{app:vit-results}
\input{files/tab-vit}
To verify that CoreQ's calibration objective transfers beyond
language models, we evaluate W4A4 quantization on DeiT-S and DeiT-B
\cite{touvron2021training} calibrated with 128 ImageNet samples
(\cref{tab:vit-result}). CoreQ outperforms both GPTQ and GPTAQ at
comparable wall-clock cost: on DeiT-S it improves top-1 accuracy by
$3.47$ points over GPTQ and $1.32$ points over
GPTAQ, and on DeiT-B by $0.95$ and $0.37$ points
respectively. The bounded-search variants (4-CoreQ and 8-CoreQ)
provide further gains on DeiT-S at modest extra cost; on DeiT-B the marginal benefit
saturates at $K = 8$. These results indicate that CoreQ is not specific to causal language modeling --- it captures structure of the layerwise mismatch that arises in transformer architectures
generally.

\input{files/tab-s-o-compare}
\input{files/tab5}
\input{files/tab-70b}
\input{files/tab-instruct-model}

\clearpage
\input{appendix/extend-related-work}

\section{Broader Impacts}
\label{app:broader-impacts}

This work aims to improve the efficiency of large language model deployment through post-training quantization. By reducing memory footprint and inference cost, CoreQ may make LLMs more accessible on resource-constrained hardware and reduce the computational and energy cost of serving existing models. These benefits are especially relevant for deployment scenarios where full-precision inference is impractical. However, more efficient inference can also lower the barrier to misuse. Quantized models may be deployed more easily for spam generation, misinformation, automated manipulation, or other harmful applications. CoreQ does not introduce new model capabilities or alter the training data, but it can make existing capabilities cheaper to run. As a result, quantized models should be deployed with the same safety evaluations, access controls, and misuse-mitigation measures as their full-precision counterparts.

%% file: appendix/new_theory.tex
\section{Theoretical Results}
\label{app:theory}

Here we provide theoretical results and explanations introduced in Section~\ref{sec:methodology}. We first provide a population view showing
why an intermediate correction coefficient is natural.

\paragraph{Notation.}
Bold uppercase symbols denote deterministic matrices, including weights and finite calibration quantities. In particular,
\(\mathbf X_{\mathrm q},\mathbf X_{\mathrm f}\in\mathbb R^{n\times N}\)
denote empirical activation matrices formed from \(N\) calibration activation vectors. Non-bold uppercase symbols
\(X_{\mathrm q},X_{\mathrm f}\in\mathbb R^n\)
denote the corresponding population random activation vectors. Thus, expectations such as
\(\mathbb E[X_{\mathrm q}X_{\mathrm q}^{\top}]\)
are taken over population activations, whereas Frobenius norms involving
\(\mathbf X_{\mathrm q}\) or \(\mathbf X_{\mathrm f}\) are empirical calibration quantities.

Let \(X_{\mathrm f},X_{\mathrm q}\in\mathbb R^n\) denote the full-precision-prefix and
quantized-prefix activation random vectors at deployment, and define
\[
    \bar{\mathbf H}:=\mathbb E[X_{\mathrm q}X_{\mathrm q}^{\top}],
    \qquad
    \bar{\mathbf C}:=\mathbb E[X_{\mathrm f}X_{\mathrm q}^{\top}].
\]
Assume \(\bar{\mathbf H}\succ 0\). Writing \(X_{\mathrm f}=X_{\mathrm q}+\Delta X\), define
\[
    \bar{\boldsymbol\Delta}:=\mathbb E[\Delta X\,X_{\mathrm q}^{\top}],
    \qquad
    \bar{\mathbf S}:=\mathbf W\bar{\boldsymbol\Delta}\bar{\mathbf H}^{-1}.
\]
Let \(\widetilde{\mathbf S}\) be a finite-sample calibration estimate of \(\bar{\mathbf S}\),
and consider the scalar family of continuous centers
\[
    \mathbf W^{(\alpha)}:=\mathbf W+\alpha\widetilde{\mathbf S},
    \qquad
    \alpha\in[0,1].
\]

\begin{theorem}[Intermediate correction under finite-calibration error]
\label{thm:population-alpha}
The population layer-wise output error
\[
    \mathcal L_{\mathrm{pop}}(\widehat{\mathbf W})
    :=
    \mathbb E\|\mathbf W X_{\mathrm f}-\widehat{\mathbf W}X_{\mathrm q}\|_2^2
\]
is minimized over continuous weights by
\[
    \mathbf W^\star = \mathbf W+\bar{\mathbf S}.
\]
Assume that the finite-sample correction estimate is unbiased,
\(\mathbb E_{\widetilde{\mathbf S}}[\widetilde{\mathbf S}]=\bar{\mathbf S}\). Ignoring discrete rounding,
the interpolated center \(\mathbf W^{(\alpha)}=\mathbf W+\alpha\widetilde{\mathbf S}\) has the expected excess population error
\[
    R(\alpha)
    :=
    \mathbb E_{\widetilde{\mathbf S}}
    \!\left[
        \mathcal L_{\mathrm{pop}}(\mathbf W^{(\alpha)})
        -
        \mathcal L_{\mathrm{pop}}(\mathbf W^\star)
    \right]
    =
    (1-\alpha)^2\mu^2+\alpha^2\sigma^2,
\]
where
\[
    \mu^2:=\|\bar{\mathbf S}\bar{\mathbf H}^{1/2}\|_F^2,
    \qquad
    \sigma^2:=
    \mathbb E_{\widetilde{\mathbf S}}
    \|(\widetilde{\mathbf S}-\bar{\mathbf S})\bar{\mathbf H}^{1/2}\|_F^2 .
\]
If \(\mu^2+\sigma^2>0\), then
\[
    \alpha^\star
    :=
    \arg\min_{\alpha\in[0,1]}R(\alpha)
    =
    \frac{\mu^2}{\mu^2+\sigma^2}.
\]
In particular, \(0<\alpha^\star<1\) whenever
\(\mu^2>0\) and \(\sigma^2>0\).
\end{theorem}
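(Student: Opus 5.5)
The proof expands \(\mathcal L_{\mathrm{pop}}\) as a quadratic in \(\widehat{\mathbf W}\), computes \(\mathbb E_{\widetilde{\mathbf S}}\) by a bias--variance split, and finishes with one-dimensional calculus in \(\alpha\).

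\textbf{Step 1: the quadratic form of \(\mathcal L_{\mathrm{pop}}\).} Write \(\mathbf W X_{\mathrm f}=\mathbf W X_{\mathrm q}+\mathbf W\Delta X\). For any continuous \(\widehat{\mathbf W}\), expand
\[
\mathcal L_{\mathrm{pop}}(\widehat{\mathbf W})
=\mathbb E\|(\mathbf W-\widehat{\mathbf W})X_{\mathrm q}+\mathbf W\Delta X\|_2^2 .
\]
Using \(\mathbb E[\|\mathbf A X_{\mathrm q}\|_2^2]=\mathrm{tr}(\mathbf A\bar{\mathbf H}\mathbf A^\top)\) and \(\mathbb E[\Delta X X_{\mathrm q}^\top]=\bar{\boldsymbol\Delta}\), this becomes, up to a constant independent of \(\widehat{\mathbf W}\),
\[
\mathrm{tr}(\mathbf A\bar{\mathbf H}\mathbf A^\top)+2\,\mathrm{tr}(\mathbf A\bar{\boldsymbol\Delta}^\top\mathbf W^\top),
\qquad \mathbf A:=\mathbf W-\widehat{\mathbf W}.
\]
Complete the square using \(\bar{\mathbf S}\bar{\mathbf H}=\mathbf W\bar{\boldsymbol\Delta}\) and the symmetry of \(\bar{\mathbf H}\). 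This gives the exact identity
\[
\mathcal L_{\mathrm{pop}}(\widehat{\mathbf W})-\mathcal L_{\mathrm{pop}}(\mathbf W^\star)
=\|(\widehat{\mathbf W}-\mathbf W-\bar{\mathbf S})\bar{\mathbf H}^{1/2}\|_F^2,
\qquad \mathbf W^\star=\mathbf W+\bar{\mathbf S}.
\]
Since \(\bar{\mathbf H}\succ0\), this excess is nonnegative and vanishes only at \(\mathbf W^\star\). So \(\mathbf W^\star\) is the unique continuous minimizer, which proves the first claim.

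\textbf{Step 2: the excess risk \(R(\alpha)\).} Plug \(\widehat{\mathbf W}=\mathbf W+\alpha\widetilde{\mathbf S}\) into the identity. The residual becomes
\[
(\alpha\widetilde{\mathbf S}-\bar{\mathbf S})\bar{\mathbf H}^{1/2}
=\alpha(\widetilde{\mathbf S}-\bar{\mathbf S})\bar{\mathbf H}^{1/2}-(1-\alpha)\bar{\mathbf S}\bar{\mathbf H}^{1/2}.
\]
Take \(\mathbb E_{\widetilde{\mathbf S}}\) of its squared Frobenius norm. The cross term is
\[
-2\alpha(1-\alpha)\,\mathrm{tr}\big(\mathbb E[\widetilde{\mathbf S}-\bar{\mathbf S}]\,\bar{\mathbf H}\bar{\mathbf S}^\top\big),
\]
which is zero by the unbiasedness assumption. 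The two remaining terms are \(\alpha^2\sigma^2\) and \((1-\alpha)^2\mu^2\), so \(R(\alpha)=(1-\alpha)^2\mu^2+\alpha^2\sigma^2\).

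I am treating \(\widetilde{\mathbf S}\) as independent of the population draw in \(\mathcal L_{\mathrm{pop}}\). This holds because \(\mathcal L_{\mathrm{pop}}\) is a deterministic function of its argument and is only evaluated at the random \(\widetilde{\mathbf S}\). I will state this explicitly, since it is the one point needing care.

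\textbf{Step 3: minimizing over \(\alpha\).} \(R\) is a convex quadratic in \(\alpha\) with leading coefficient \(\mu^2+\sigma^2>0\), so it is strictly convex. Setting
\[
R'(\alpha)=-2(1-\alpha)\mu^2+2\alpha\sigma^2=0
\]
gives \(\alpha=\mu^2/(\mu^2+\sigma^2)\). This value lies in \([0,1]\), so it is also the constrained minimizer over \([0,1]\). If \(\mu^2,\sigma^2>0\), both numerator and denominator are positive and the numerator is strictly smaller, so \(0<\alpha^\star<1\).

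\textbf{Main obstacle.} The only real step is the completion of the square in Step 1, where the cross term must be matched correctly with \(\bar{\mathbf S}\bar{\mathbf H}\). I would check it by verifying the first-order condition \((\widehat{\mathbf W}-\mathbf W)\bar{\mathbf H}=\mathbf W\bar{\boldsymbol\Delta}\) directly.
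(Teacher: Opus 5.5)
Your proof is correct and follows the paper's argument: you complete the square to get the excess $\|(\widehat{\mathbf W}-\mathbf W^\star)\bar{\mathbf H}^{1/2}\|_F^2$, substitute $\mathbf W^{(\alpha)}$, use unbiasedness to remove the cross term, and minimize the resulting scalar quadratic in $\alpha$. The only difference is cosmetic: you expand around $\mathbf W X_{\mathrm q}$ and use $\bar{\boldsymbol\Delta}$ directly, whereas the paper expands with $\bar{\mathbf C}=\mathbb E[X_{\mathrm f}X_{\mathrm q}^\top]$ and then recovers $\mathbf W\bar{\mathbf C}\bar{\mathbf H}^{-1}=\mathbf W+\bar{\mathbf S}$ from $\bar{\mathbf C}=\bar{\mathbf H}+\bar{\boldsymbol\Delta}$.
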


\begin{proof}
Expanding the population loss gives
\[
\begin{aligned}
    \mathcal L_{\mathrm{pop}}(\widehat{\mathbf W})
    &=
    \operatorname{tr}\!\left(\mathbf W\mathbb E[X_{\mathrm f}X_{\mathrm f}^{\top}]\mathbf W^{\top}\right)
    -2\operatorname{tr}\!\left(\mathbf W\bar{\mathbf C}\widehat{\mathbf W}^{\top}\right)
    +\operatorname{tr}\!\left(\widehat{\mathbf W}\bar{\mathbf H}\widehat{\mathbf W}^{\top}\right).
\end{aligned}
\]
Completing the square gives
\[
    \mathcal L_{\mathrm{pop}}(\widehat{\mathbf W})
    -
    \mathcal L_{\mathrm{pop}}(\mathbf W^\star)
    =
    \|(\widehat{\mathbf W}-\mathbf W^\star)\bar{\mathbf H}^{1/2}\|_F^2 .
\]
Since \(X_{\mathrm f}=X_{\mathrm q}+\Delta X\),
\[
    \bar{\mathbf C}
    =
    \bar{\mathbf H}+\bar{\boldsymbol\Delta},
\]
so
\[
    \mathbf W\bar{\mathbf C}\bar{\mathbf H}^{-1}
    =
    \mathbf W(\bar{\mathbf H}+\bar{\boldsymbol\Delta})\bar{\mathbf H}^{-1}
    =
    \mathbf W+\mathbf W\bar{\boldsymbol\Delta}\bar{\mathbf H}^{-1}
    =
    \mathbf W+\bar{\mathbf S}.
\]
Thus \(\mathbf W^\star=\mathbf W+\bar{\mathbf S}\).

For the interpolated center \(\mathbf W^{(\alpha)}=\mathbf W+\alpha\widetilde{\mathbf S}\),
\[
    \mathbf W^{(\alpha)}-\mathbf W^\star
    =
    \mathbf W+\alpha\widetilde{\mathbf S}-(\mathbf W+\bar{\mathbf S})
    =
    -(1-\alpha)\bar{\mathbf S}+\alpha(\widetilde{\mathbf S}-\bar{\mathbf S}).
\]
Therefore,
\[
\begin{aligned}
    R(\alpha)
    &=
    \mathbb E_{\widetilde{\mathbf S}}
    \left[
        \| (\mathbf W^{(\alpha)}-\mathbf W^\star)\bar{\mathbf H}^{1/2}\|_F^2
    \right] \\
    &=
    (1-\alpha)^2\|\bar{\mathbf S}\bar{\mathbf H}^{1/2}\|_F^2
    +
    \alpha^2
    \mathbb E_{\widetilde{\mathbf S}}
    \|(\widetilde{\mathbf S}-\bar{\mathbf S})\bar{\mathbf H}^{1/2}\|_F^2 ,
\end{aligned}
\]
where the cross term vanishes because
\(\mathbb E_{\widetilde{\mathbf S}}[\widetilde{\mathbf S}-\bar{\mathbf S}]=0\).
Minimizing the resulting scalar quadratic gives
\[
    \alpha^\star
    =
    \frac{\mu^2}{\mu^2+\sigma^2}.
\]
\end{proof}

{\bf Interpretation.}
\cref{thm:population-alpha} explains why interpolation is needed:
\(\alpha=0\) ignores the population correction \(\bar{\mathbf S}\), while
\(\alpha=1\) fully trusts the finite-sample estimate \(\widetilde{\mathbf S}\).
The optimal scalar coefficient balances population correction energy
\(\mu^2\) against finite-calibration estimation error \(\sigma^2\), and is therefore
intermediate whenever both are present.

\begin{lemma}[Reliability-regularized empirical target risk]
\label{lem:d-proxy}
Let
\[
    \mathbf W^{(\alpha)}:=\mathbf W+\alpha\widetilde{\mathbf S},
    \qquad
    \mathbf Y_{\mathcal R}:=(\mathbf W+\widetilde{\mathbf S})\mathbf X_{\mathrm q},
    \qquad
    \mathbf{D}=\mathbf s+\boldsymbol{\eta},
    \qquad
    \mathbf s:=\widetilde{\mathbf S}\mathbf X_{\mathrm q},
    \quad
    \boldsymbol{\eta}\perp\mathcal R .
\]
Let
\[
    \mathbf X^{(\alpha)}
    :=
    (1-\alpha)\mathbf X_{\mathrm q}
    +
    \alpha\mathbf X_{\mathrm f}.
\]
The empirical plug-in analogue of the excess population error
in \cref{thm:population-alpha} is
\[
    \widehat R_{\mathrm{plug}}(\alpha)
    :=
    \|(\mathbf W^{(\alpha)}-(\mathbf W+\widetilde{\mathbf S}))\mathbf X_{\mathrm q}\|_F^2
    =
    (1-\alpha)^2\|\mathbf s\|_F^2 .
\]
Moreover, the displacement criterion used in CoreQ satisfies
\[
    \|\mathbf Y_{\mathcal R}-\mathbf W\mathbf X^{(\alpha)}\|_F^2
    =
    \widehat R_{\mathrm{plug}}(\alpha)
    +
    \|\mathbf W\mathbf X^{(\alpha)}-\mathbf W^{(\alpha)}\mathbf X_{\mathrm q}\|_F^2
    =
    (1-\alpha)^2\|\mathbf s\|_F^2+\alpha^2\|\boldsymbol{\eta}\|_F^2 .
\]
\end{lemma}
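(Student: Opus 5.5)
The plan is a direct algebraic verification built on the decomposition \eqref{eq:U-decomp}, $\mathbf D=\mathbf s+\boldsymbol\eta$ with $\mathbf s=\widetilde{\mathbf S}\mathbf X_{\mathrm q}\in\mathcal R$ and $\boldsymbol\eta\perp\mathcal R$. The only structural fact needed is that $\boldsymbol\eta$ is orthogonal, in the Frobenius inner product, to every matrix of the form $\mathbf A\mathbf X_{\mathrm q}$. This holds because $\widetilde{\mathbf S}$ is the least-squares solution, which gives $\boldsymbol\eta\mathbf X_{\mathrm q}^\top=\mathbf D\mathbf X_{\mathrm q}^\top-\widetilde{\mathbf S}\mathbf H=0$. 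Therefore $\langle \boldsymbol\eta,\mathbf A\mathbf X_{\mathrm q}\rangle_F=\mathrm{tr}(\boldsymbol\eta\mathbf X_{\mathrm q}^\top\mathbf A^\top)=0$ for all $\mathbf A$.

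\textbf{Step 1 (the plug-in identity).} Substituting $\mathbf W^{(\alpha)}=\mathbf W+\alpha\widetilde{\mathbf S}$ gives
\[
\mathbf W^{(\alpha)}-(\mathbf W+\widetilde{\mathbf S})=-(1-\alpha)\widetilde{\mathbf S}.
\]
Hence $\widehat R_{\mathrm{plug}}(\alpha)=(1-\alpha)^2\|\widetilde{\mathbf S}\mathbf X_{\mathrm q}\|_F^2=(1-\alpha)^2\|\mathbf s\|_F^2$. I would also remark that this is the empirical analogue of the $(1-\alpha)^2\mu^2$ term in \cref{thm:population-alpha}: $\bar{\mathbf H}^{1/2}$ is replaced by $\mathbf X_{\mathrm q}$ and $\bar{\mathbf S}$ by $\widetilde{\mathbf S}$.

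\textbf{Step 2 (splitting the displacement).} Using \eqref{eq:WX-alpha}, $\mathbf W\mathbf X^{(\alpha)}=\mathbf W\mathbf X_{\mathrm q}+\alpha\mathbf D$. This yields two identities:
\[
\mathbf Y_{\mathcal R}-\mathbf W^{(\alpha)}\mathbf X_{\mathrm q}=(1-\alpha)\mathbf s,
\qquad
\mathbf W^{(\alpha)}\mathbf X_{\mathrm q}-\mathbf W\mathbf X^{(\alpha)}=\alpha\mathbf s-\alpha\mathbf D=-\alpha\boldsymbol\eta.
\]
I then write
\[
\mathbf Y_{\mathcal R}-\mathbf W\mathbf X^{(\alpha)}=(\mathbf Y_{\mathcal R}-\mathbf W^{(\alpha)}\mathbf X_{\mathrm q})+(\mathbf W^{(\alpha)}\mathbf X_{\mathrm q}-\mathbf W\mathbf X^{(\alpha)})=(1-\alpha)\mathbf s-\alpha\boldsymbol\eta.
\]
The first summand lies in $\mathcal R$ and the second is orthogonal to $\mathcal R$. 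Pythagoras in the Frobenius inner product then gives the middle expression of the claim,
\[
\widehat R_{\mathrm{plug}}(\alpha)+\|\mathbf W\mathbf X^{(\alpha)}-\mathbf W^{(\alpha)}\mathbf X_{\mathrm q}\|_F^2,
\]
where the first term is identified using Step 1. Evaluating the norms as $(1-\alpha)^2\|\mathbf s\|_F^2$ and $\alpha^2\|\boldsymbol\eta\|_F^2$ gives the final expression, which matches \eqref{eq:distance}.

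\textbf{Main obstacle.} The only nontrivial point is justifying that the cross term $-2\alpha(1-\alpha)\langle\mathbf s,\boldsymbol\eta\rangle_F$ vanishes. This is exactly the normal-equation orthogonality established at the start, and it requires $\mathbf H=\mathbf X_{\mathrm q}\mathbf X_{\mathrm q}^\top$ to be invertible so that $\widetilde{\mathbf S}$ is well defined. If $\mathbf H$ is singular, I would instead take $\widetilde{\mathbf S}$ via the pseudoinverse; the argument goes through unchanged, since orthogonality of the projection residual still holds.
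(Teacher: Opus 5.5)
Your proposal is correct and follows essentially the same route as the paper. Both arguments compute $\mathbf Y_{\mathcal R}-\mathbf W^{(\alpha)}\mathbf X_{\mathrm q}=(1-\alpha)\mathbf s$ and $\mathbf W\mathbf X^{(\alpha)}=\mathbf W^{(\alpha)}\mathbf X_{\mathrm q}+\alpha\boldsymbol\eta$, write $\mathbf Y_{\mathcal R}-\mathbf W\mathbf X^{(\alpha)}=(1-\alpha)\mathbf s-\alpha\boldsymbol\eta$, and conclude by Pythagoras since $\mathbf s\in\mathcal R$ and $\boldsymbol\eta\perp\mathcal R$. Your normal-equation derivation of that orthogonality is a harmless extra: the lemma already assumes $\boldsymbol\eta\perp\mathcal R$, and the paper uses it without further justification.
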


\begin{proof}
Since \(\mathbf W^{(\alpha)}\mathbf X_{\mathrm q}=\mathbf W\mathbf X_{\mathrm q}+\alpha\mathbf s\) and
\(\mathbf Y_{\mathcal R}=\mathbf W\mathbf X_{\mathrm q}+\mathbf s\),
\[
    \widehat R_{\mathrm{plug}}(\alpha)
    =
    \|\mathbf Y_{\mathcal R}-\mathbf W^{(\alpha)}\mathbf X_{\mathrm q}\|_F^2
    =
    (1-\alpha)^2\|\mathbf s\|_F^2 .
\]
Also,
\[
    \mathbf W\mathbf X^{(\alpha)}
    =
    \mathbf W\mathbf X_{\mathrm q}+\alpha\mathbf{D}
    =
    \mathbf W\mathbf X_{\mathrm q}+\alpha\mathbf s+\alpha\boldsymbol{\eta}
    =
    \mathbf W^{(\alpha)}\mathbf X_{\mathrm q}+\alpha\boldsymbol{\eta}.
\]
Therefore
\[
    \mathbf Y_{\mathcal R}-\mathbf W\mathbf X^{(\alpha)}
    =
    (\mathbf Y_{\mathcal R}-\mathbf W^{(\alpha)}\mathbf X_{\mathrm q})
    +
    (\mathbf W^{(\alpha)}\mathbf X_{\mathrm q}-\mathbf W\mathbf X^{(\alpha)})
    =
    (1-\alpha)\mathbf s-\alpha\boldsymbol{\eta}.
\]
The two terms are orthogonal because \(\mathbf s\in\mathcal R\) and
\(\boldsymbol{\eta}\perp\mathcal R\), giving the stated decomposition.
\end{proof}

\paragraph{Connection to CoreQ.}
\cref{thm:population-alpha} shows that, when the finite-calibration correction estimate is imperfect, the desired correction need not be full; the population-optimal coefficient balances the benefit of applying the correction against the cost of trusting an inaccurate estimate. This motivates shrinkage, but it does not by itself provide a computable coefficient, since the population correction energy and estimation error are unavailable in layer-wise PTQ.

The calibration-set analogue in Lemma~\ref{lem:d-proxy} measures only the distance to the finite-sample reachable target and therefore always prefers \(\alpha=1\). This is overly optimistic because it provides no penalty for the portion of the observed mismatch that cannot be produced by changing the current layer's weights. CoreQ therefore uses the orthogonal residual energy \(\|\boldsymbol{\eta}\|_F^2\) as an observable geometric penalty for such unreachable mismatch. This yields the reliability-regularized distance metric
\[
    \|\mathbf d(\alpha)\|_F^2
    =
    \|\mathbf Y_{\mathcal R}-\mathbf W\mathbf X^{(\alpha)}\|_F^2
    =
    (1-\alpha)^2\|\mathbf s\|_F^2+\alpha^2\|\boldsymbol{\eta}\|_F^2,
\]
whose minimizer is
\[
    \alpha_{\mathrm{corr}}
    =
    \frac{\|\mathbf s\|_F^2}
         {\|\mathbf s\|_F^2+\|\boldsymbol{\eta}\|_F^2}.
\]
Thus, \(\alpha_{\mathrm{corr}}\) measures the empirical fraction of observed mismatch energy that lies in the reachable response space of the current layer. It shrinks the correction when a large portion of the mismatch is structurally unexplained by the current layer, which helps avoid over-trusting calibration-specific mismatch patterns. 

%% file: files/tab-matrix-dim.tex
\begin{table}[h]
\centering
\small
\caption{Dimensions of major tensors used in CoreQ and $K$-CoreQ rounding for
one linear layer of shape $m \times n$. $B$ is the block size, $K$ is the beam
width, and $A$ is the number of quantization levels. The
center correction $\mathbf{T}$ used in lazy-batch update is defined in
Algorithms~\ref{alg:coreq-lazy-batch} and~\ref{alg:kcoreq-lazy-batch}.}
\vspace{0.05 in}
\label{tab:beam_dims}
\resizebox{0.7\linewidth}{!}{
\begin{tabular}{l|c|c}
\bottomrule
Tensor & CoreQ ($K{=}1$) & $K$-CoreQ (beam) \\
\hline
Corrected rounding center $\mathbf{W}_{\mathrm{corr}}$ & $m \times n$ & $m \times n$ \\
Cholesky factor $\mathbf{L}$ & $n \times n$ & $n \times n$ \\
Quantized weight $\mathbf{Q}$ & $m \times n$ & $m \times n$ \\
\hline
Beam scores $\mathbf{S}$ & --- & $m \times K$ \\
Suffix decisions $\mathbf{Q}_{\mathrm{tail}}$ & --- & up to $m \times K \times n$ \\
Per-step expansion cost $\boldsymbol{\Delta}$ & --- & $m \times K \times A$ \\
Beam ancestry $\bm{\pi}$ & --- & $m \times K$ \\
\hline
Block decisions $\mathbf{Q}_{\mathrm{blk}}$ & $m \times B$ & $m \times K \times B$ \\
Center correction $\mathbf{T}$ & $m \times B$ & $m \times K \times B$ \\
Interference-cancelled center $\widetilde{\mathbf{W}}$ & $m \times B$ & $m \times K$ \\
\toprule
\end{tabular}
}
\end{table}

%% file: files/alg_alpha.tex
\begin{figure}[h]
\makebox[\columnwidth][c]{%
\begin{minipage}[h]{0.9\linewidth}
\begin{algorithm}[H]
\DontPrintSemicolon
\caption{$\alpha_{\mathrm{corr}}$: closed-form correction coefficient}
\label{alg:alpha-corr}
\SetKwInput{KwData}{Input}
\SetKwInput{KwResult}{Output}
\KwData{FP weight $\mathbf{W}\in\mathbb{R}^{m\times n}$,
teacher input $\mathbf{X}_f\in\mathbb{R}^{n\times N}$,
student input $\mathbf{X}_q\in\mathbb{R}^{n\times N}$,
damping $\lambda\ge 0$}
\KwResult{Correction coefficient $\alpha_{\mathrm{corr}}\in[0,1]$,
shifted target $\mathbf{W}_\mathrm{corr}\in\mathbb{R}^{m\times n}$}

$\mathbf{H} \leftarrow \mathbf{X}_q\mathbf{X}_q^\top + \lambda\mathbf{I}$
\tcp*{Hessian with damping}
$\mathbf{U} \leftarrow \mathbf{W}(\mathbf{X}_f - \mathbf{X}_q)$
\tcp*{layerwise mismatch}
$\widetilde{\mathbf{S}} \leftarrow \mathbf{U}\mathbf{X}_q^\top\mathbf{H}^{-1}$
\tcp*{empirical reachable correction}
$\mathbf{s} \leftarrow \widetilde{\mathbf{S}}\mathbf{X}_q$
\tcp*{reachable component}
$\mathbf{\eta} \leftarrow \mathbf{U} - \mathbf{s}$
\tcp*{residual}
$\alpha_{\mathrm{corr}} \leftarrow
    \dfrac{\|\mathbf{s}\|_F^2}{\|\mathbf{s}\|_F^2 + \|\mathbf{\eta}\|_F^2}$
    
$\mathbf{W}_\mathrm{corr} \leftarrow \mathbf{W} + \alpha_{\mathrm{corr}}\widetilde{\mathbf{S}}$
\tcp*{shifted rounding center}
\Return{$(\alpha_{\mathrm{corr}},\, \mathbf{W}_\mathrm{corr})$}
\end{algorithm}
\end{minipage}
}
\end{figure}

%% file: files/lazy-batch-algorithm.tex
\begin{figure}[h]
\makebox[\columnwidth][c]{%
\begin{minipage}[h]{0.9\linewidth}
\begin{algorithm}[H]
\DontPrintSemicolon
\caption{CoreQ: greedy successive rounding with lazy-batch update}
\label{alg:coreq-lazy-batch}
\SetKwInput{KwData}{Input}
\SetKwInput{KwResult}{Output}
\KwData{FP weight $\mathbf{W}\in\mathbb{R}^{m\times n}$,
teacher input $\mathbf{X}_f$, student input $\mathbf{X}_q$,
permutation $\mathbf{P}$, block size $B$, damping $\lambda\ge 0$}
\KwResult{Quantized weight $\mathbf{Q}$}

$(\alpha_{\mathrm{corr}},\, \mathbf{W}_\mathrm{corr}) \leftarrow
    \textsc{ComputeAlpha}(\mathbf{W}, \mathbf{X}_f, \mathbf{X}_q, \lambda)$
\tcp*{Algorithm~\ref{alg:alpha-corr}}

$\mathbf{H} \leftarrow \mathbf{P}^\top(\mathbf{X}_q\mathbf{X}_q^\top + \lambda\mathbf{I})\mathbf{P}$

$\mathbf{L} \leftarrow \textsc{Cholesky}(\mathbf{H})$

$\mathbf{L} \leftarrow \mathbf{L}/\mathrm{diag}(\mathbf{L}) - \mathbf{I}$

$\mathbf{W}_\mathrm{corr} \leftarrow \mathbf{W}_\mathrm{corr}\mathbf{P}$
\tcp*{permute target}
$\mathbf{Q} \in \mathbb{R}^{m\times n} \leftarrow \mathbf{0}$

\For{$i = n,\;n-B,\;n-2B,\dots$}{
    $\mathbf{T} \leftarrow
        (\mathbf{W}_\mathrm{corr}[:, i:] - \mathbf{Q}[:, i:])\,
        \mathbf{L}[i:,\, i\!-\!B:i] \in \mathbb{R}^{m\times B}$
    \tcp*{tail correction}
    $\widetilde{\mathbf{W}} \in \mathbb{R}^{m\times B} \leftarrow \mathbf{0}$
    
    \For{$j=B-1,B-2,\dots,0$}{
        $t \leftarrow i - B + j$
        \tcp*{absolute column index}
        $\widetilde{\mathbf{W}}[:, j] \leftarrow
            \mathbf{W}_\mathrm{corr}[:, t] +
            (\mathbf{W}_\mathrm{corr}[:, i\!-\!B:i] - \mathbf{Q}[:, i\!-\!B:i])\,
            \mathbf{L}[i\!-\!B:i,\, t] +
            \mathbf{T}[:, j]$
            
        $\mathbf{Q}[:, t] \leftarrow
            \textsc{Round}(\widetilde{\mathbf{W}}[:, j])$
    }
}
$\mathbf{Q} \leftarrow \mathbf{Q}\mathbf{P}^\top$
\tcp*{undo permutation}
\Return{$\mathbf{Q}$}
\end{algorithm}
\end{minipage}
}
\end{figure}

%% file: files/algorithm2.tex
\begin{figure}[h]
\makebox[\columnwidth][c]{%
\begin{minipage}[h]{0.98\linewidth}
\begin{algorithm}[H]
\DontPrintSemicolon
\caption{$K$-CoreQ: beam-search rounding with lazy-batch update}
\label{alg:kcoreq-lazy-batch}
\SetKwInput{KwData}{Input}
\SetKwInput{KwResult}{Output}
\KwData{FP weight $\mathbf{W}\in\mathbb{R}^{m\times n}$,
teacher input $\mathbf{X}_f$, student input $\mathbf{X}_q$,
permutation $\mathbf{P}$, block size $B$, beam width $K$,
damping $\lambda\ge 0$}
\KwResult{Quantized weight $\mathbf{Q}$}

$(\alpha_{\mathrm{corr}},\, \mathbf{W}_\mathrm{corr}) \leftarrow
    \textsc{ComputeAlpha}(\mathbf{W}, \mathbf{X}_f, \mathbf{X}_q, \lambda)$
\tcp*{Algorithm~\ref{alg:alpha-corr}}
$\mathbf{H} \leftarrow \mathbf{P}^\top(\mathbf{X}_q\mathbf{X}_q^\top + \lambda\mathbf{I})\mathbf{P}$

$\mathbf{L} \leftarrow \textsc{Cholesky}(\mathbf{H})$

$\mathbf{L} \leftarrow \mathbf{L}/\mathrm{diag}(\mathbf{L}) - \mathbf{I}$

$\mathbf{W}_\mathrm{corr} \leftarrow \mathbf{W}_\mathrm{corr}\mathbf{P}$
\tcp*{permute target}

\textbf{Initialize beam:}

$\mathbf{S} \in \mathbb{R}^{m\times K} \leftarrow +\infty,\;
 \mathbf{S}_{:,0} \leftarrow 0$
\tcp*{beam scores (per row)}

$\mathbf{Q}_{\rm tail} \in \mathbb{R}^{m\times K\times 0} \leftarrow \emptyset$
\tcp*{accumulated suffix decisions}

\For{$i = n,\;n-B,\;n-2B,\dots$}{
    $\mathbf{T} \leftarrow
        (\mathbf{W}_\mathrm{corr}[:, i:] - \mathbf{Q}_{\rm tail})\,
        \mathbf{L}[i:,\, i\!-\!B:i]
        \in \mathbb{R}^{m\times K\times B}$
    \tcp*{tail correction}
    
    $\mathbf{Q}_{\rm blk} \in \mathbb{R}^{m\times K\times B} \leftarrow \mathbf{0}$
    
    $\bm{\pi} \in \mathbb{R}^{m\times K} \leftarrow [0, 1, \dots, K-1]$
    \tcp*{beam ancestry}
    
    \For{$j=B-1,B-2,\dots,0$}{
        $t \leftarrow i - B + j$
        \tcp*{absolute column index}
        
        $\widetilde{\mathbf{W}} \leftarrow
            \mathbf{W}_\mathrm{corr}[:, t] +
            (\mathbf{W}_\mathrm{corr}[:, i\!-\!B:i] - \mathbf{Q}_{\rm blk})\,
            \mathbf{L}[i\!-\!B:i,\, t] +
            \mathbf{T}[:, :, j]
            \in \mathbb{R}^{m\times K}$
            
        $\mathbf{V} \leftarrow \textsc{Levels}(\widetilde{\mathbf{W}})
            \in \mathbb{R}^{m\times A}$
        \tcp*{candidate quantization levels}
        
        $\boldsymbol{\Delta} \leftarrow \mathbf{L}[t, t]^2 \,
            |\widetilde{\mathbf{W}} - \mathbf{V}|^2
            \in \mathbb{R}^{m\times K\times A}$
            
        $(\mathbf{S},\, \bm{parent},\, \bm{choice}) \leftarrow
            \textsc{TopK}(\mathbf{S} + \boldsymbol{\Delta},\; K)$
        \tcp*{keep best $K$}
        
        $\mathbf{Q}_{\rm blk} \leftarrow
            \textsc{Gather}(\mathbf{Q}_{\rm blk}, \bm{parent})$
            
        $\mathbf{T} \leftarrow \textsc{Gather}(\mathbf{T}, \bm{parent})$
        
        $\bm{\pi} \leftarrow \textsc{Gather}(\bm{\pi}, \bm{parent})$
        
        $\mathbf{Q}_{\rm blk}[:, :, j] \leftarrow
            \textsc{PickLevel}(\mathbf{V}, \bm{choice})$
        \tcp*{selected level for column $t$}
    }
    $\mathbf{Q}_{\rm tail} \leftarrow
        \textsc{Concat}\!\big(\mathbf{Q}_{\rm blk},\;
                              \textsc{Gather}(\mathbf{Q}_{\rm tail}, \bm{\pi})\big)$
}
$k^\star_r \leftarrow \arg\min_{k \in [K]} \mathbf{S}_{r, k},\;\; \forall r \in [m]$
\tcp*{best beam per row}
$\mathbf{Q}_p[r, :] \leftarrow \mathbf{Q}_{\rm tail}[r,\, k^\star_r,\, :],\;\;
\forall r \in [m]$

$\mathbf{Q} \leftarrow \mathbf{Q}_p\mathbf{P}^\top$
\tcp*{undo permutation}
\Return{$\mathbf{Q}$}
\end{algorithm}
\end{minipage}
}
\end{figure}

%% file: files/fig-lim-asym.tex
\begin{figure}[h]
    \centering
    \includegraphics[width=0.9\linewidth]{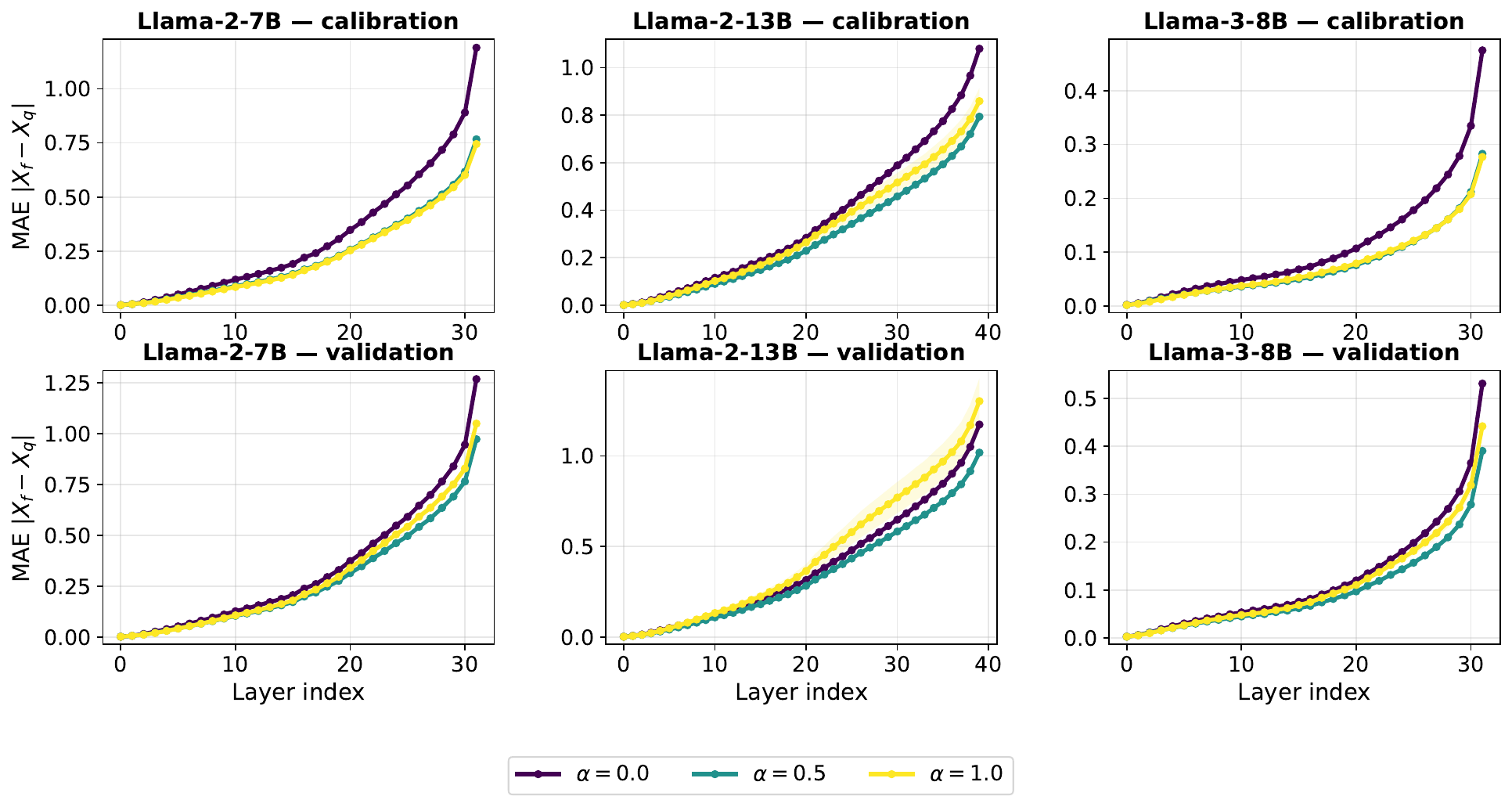}
    \caption{Mean activation error (MAE) measured on (top) calibration and (bottom) validation set. }
    \label{fig:alpha-cal-and-val}
\end{figure}

%% file: files/fig-alpha-sweep-llama.tex
\begin{figure}[t]
    \centering
    \begin{subfigure}[t]{1.0\linewidth}
        \centering
        \includegraphics[width=\linewidth]{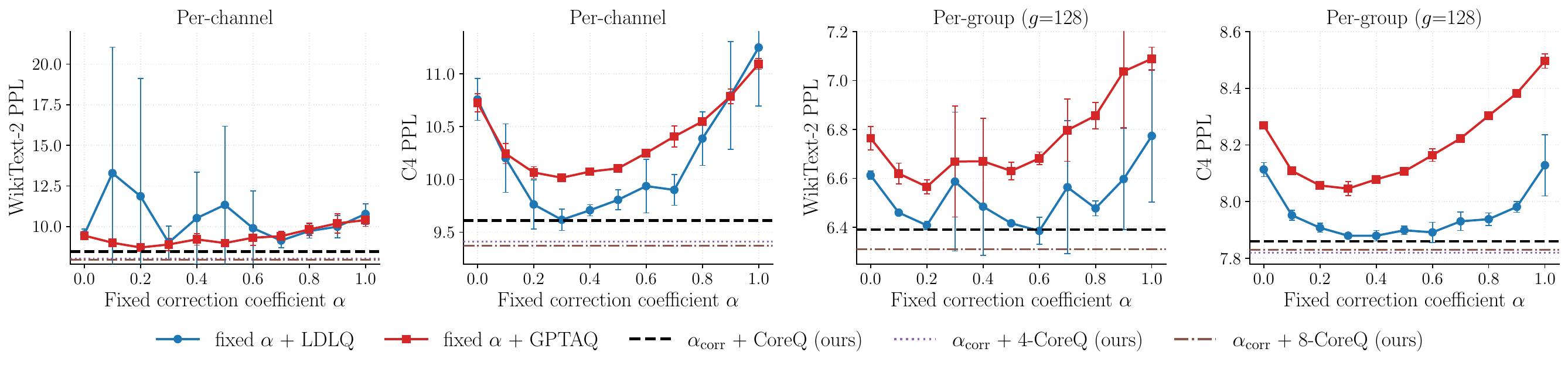}
        \vspace{-0.2in}
        \caption{LLaMA-2-7B}
        \label{fig:alpha-sweep-l2-7b}
    \end{subfigure}

    \vspace{0.05in}

    \begin{subfigure}[t]{1.0\linewidth}
        \centering
        \includegraphics[width=\linewidth]{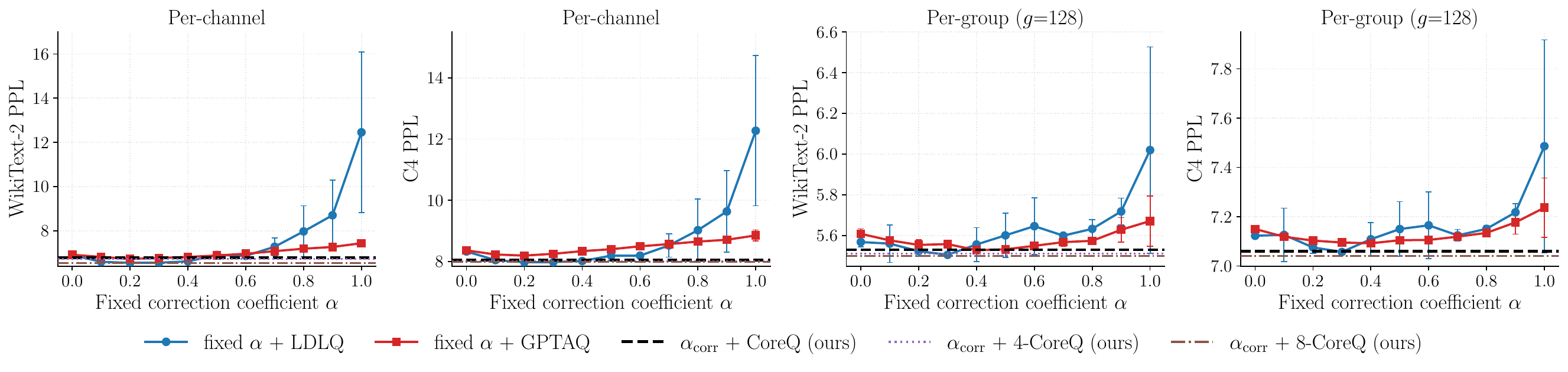}
        \vspace{-0.2in}
        \caption{LLaMA-2-13B}
        \label{fig:alpha-sweep-l2-13b}
    \end{subfigure}

    \vspace{0.05in}

    \begin{subfigure}[t]{1.0\linewidth}
        \centering
        \includegraphics[width=\linewidth]{figures/alpha_sweep_l3_8b.pdf}
        \vspace{-0.2in}
        \caption{LLaMA-3-8B}
        \label{fig:alpha-sweep-l3-8b}
    \end{subfigure}

    \caption{Fixed-$\alpha$ sweep on 3-bit weight-only quantization. Each panel reports perplexity on
    WikiText-2 or C4 under per-channel and per-group ($g{=}128$)
    granularity, mean$\,\pm\,$std over 5 calibration seeds. Solid lines
    sweep fixed $\alpha$ for LDLQ and GPTAQ; horizontal bands show
    CoreQ and its bounded-search variants $K$-CoreQ ($K{\in}\{4, 8\}$)
    using the closed-form $\alpha_{\mathrm{corr}}$.}
    \label{fig:alpha-sweep-extra}
\end{figure}

%% file: files/tab-cal-size.tex
\begin{table}[h]
\centering
\caption{Sensitivity to calibration set size. LLaMA-2-7B with 3-bit per-channel
weight-only quantization, calibrated on C4 (sequence length 2048). Perplexity
on Wikitext-2 (Wiki2) and C4 (lower is better), and average zero-shot accuracy
across six commonsense reasoning tasks (higher is better).}
\vspace{0.05 in}
\label{tab:calset-sensitivity}
\setlength{\tabcolsep}{4pt}
\resizebox{0.8\textwidth}{!}{%
\begin{tabular}{lcc|cc|cc}
\toprule
 & \multicolumn{2}{c}{Wiki2 PPL ($\downarrow$)}
 & \multicolumn{2}{c}{C4 PPL ($\downarrow$)}
 & \multicolumn{2}{c}{Avg. Acc. (\%, $\uparrow$)} \\
\cmidrule(lr){2-3} \cmidrule(lr){4-5} \cmidrule(lr){6-7}
\# Samples & GPTAQ & CoreQ & GPTAQ & CoreQ & GPTAQ & CoreQ \\
\midrule
16
  & 11.09 $\pm$ 0.34 & \cellcolor{blue!8} 11.03 $\pm$ 1.00
  & 12.13 $\pm$ 0.19 & \cellcolor{blue!8} 11.34 $\pm$ 0.14
  & 58.17 $\pm$ 0.48 & \cellcolor{blue!8} 59.28 $\pm$ 0.61 \\
32
  & 10.37 $\pm$ 1.35 & \cellcolor{blue!8}  9.17 $\pm$ 0.31
  & 10.80 $\pm$ 0.10 & \cellcolor{blue!8} 10.06 $\pm$ 0.07
  & 60.10 $\pm$ 0.23 & \cellcolor{blue!8} 60.96 $\pm$ 0.70 \\
64
  &  9.28 $\pm$ 0.57 & \cellcolor{blue!8}  9.06 $\pm$ 0.65
  & 10.26 $\pm$ 0.02 & \cellcolor{blue!8}  9.82 $\pm$ 0.22
  & 60.86 $\pm$ 0.64 & \cellcolor{blue!8} 61.01 $\pm$ 0.80 \\
128
  &  8.72 $\pm$ 0.20 & \cellcolor{blue!8}  8.41 $\pm$ 0.21
  & 10.01 $\pm$ 0.04 & \cellcolor{blue!8}  9.61 $\pm$ 0.12
  & 60.91 $\pm$ 0.52 & \cellcolor{blue!8} 61.46 $\pm$ 0.50 \\
\bottomrule
\end{tabular}%
}
\end{table}

%% file: files/tab-outlier-red.tex
\begin{table}[h]
\centering
\caption{W3 weight-only quantization: perplexity on Wiki2/C4 and wall-clock quantization time (seconds, single GPU). Perplexities are mean\,$\pm$\,std over calibration seeds; quantization time is mean over the same runs. AWQ does not include calibration-seed variance and is reported once.}
\vspace{0.05 in}
\label{tab:outlier-woq}
\setlength{\tabcolsep}{3pt}
\resizebox{\textwidth}{!}{%
\begin{tabular}{l ccc ccc ccc}
\toprule
 & \multicolumn{3}{c}{LLaMA-2-7B} & \multicolumn{3}{c}{LLaMA-2-13B} & \multicolumn{3}{c}{LLaMA-3-8B} \\
\cmidrule(lr){2-4} \cmidrule(lr){5-7} \cmidrule(lr){8-10}
Method & Wiki2 & C4 & Time & Wiki2 & C4 & Time & Wiki2 & C4 & Time \\
\midrule
\multicolumn{10}{l}{\textit{Per-channel (W3)}} \\
\midrule
AWQ              & 16.93 $\pm$ 0.33 & 18.55 $\pm$ 0.45 & 801
                 &  6.48 $\pm$ 0.02 &  7.92 $\pm$ 0.02 & 1326
                 & 12.18 $\pm$ 0.16 & 16.48 $\pm$ 0.20 & 777 \\
OmniQuant        &  6.63 $\pm$ 0.01 &  8.64 $\pm$ 0.02 & 2309
                 &  5.60 $\pm$ 0.02 &  7.46 $\pm$ 0.03 & 3975
                 & 15.11 $\pm$ 0.47 & 21.10 $\pm$ 0.74 & 2338 \\
\rowcolor{blue!8} QuaRot + CoreQ
                 & \textbf{5.84 $\pm$ 0.01} & \textbf{7.70 $\pm$ 0.01} & \textbf{570}
                 & \textbf{5.17 $\pm$ 0.00} & \textbf{6.99 $\pm$ 0.00} & \textbf{956}
                 & \textbf{7.14 $\pm$ 0.01} & \textbf{11.09 $\pm$ 0.04} & \textbf{679} \\
\midrule
\multicolumn{10}{l}{\textit{Group-wise, $g{=}128$ (W3)}} \\
\midrule
AWQ              &  6.19 $\pm$ 0.01 &  7.33 $\pm$ 0.01 & {843}
                 &  5.31 $\pm$ 0.01 &  6.57 $\pm$ 0.01 & {1419}
                 &  8.18 $\pm$ 0.02 & 11.11 $\pm$ 0.01 & \textbf{838} \\
OmniQuant        &  6.10 $\pm$ 0.01 &  7.81 $\pm$ 0.01 & 2334
                 &  5.32 $\pm$ 0.00 &  7.02 $\pm$ 0.00 & 4147
                 &  8.81 $\pm$ 0.04 & 12.50 $\pm$ 0.02 & 2438 \\
\rowcolor{blue!8} QuaRot + CoreQ
                 & \textbf{5.78 $\pm$ 0.00} & \textbf{7.58 $\pm$ 0.00} & \textbf{787}
                 & \textbf{5.12 $\pm$ 0.00} & \textbf{6.91 $\pm$ 0.00} & \textbf{1282}
                 & \textbf{6.99 $\pm$ 0.01} & \textbf{10.74 $\pm$ 0.04} & 903 \\
\bottomrule
\end{tabular}%
}
\end{table}

%% file: files/tab-vit.tex
\begin{table}[h]
\centering
\caption{W4A4 quantization results of vision transformers, calibrated using 128 samples of ImageNet. All results are averaged over 5 random seeds.}
\vspace{0.05 in}
\label{tab:vit-result}
\resizebox{0.7\linewidth}{!}{
\begin{tabular}{l|cc|cc}
\bottomrule
\multirow{2}{*}{Method} & \multicolumn{2}{c|}{DeiT-S} & \multicolumn{2}{c}{DeiT-B} \\
 & Acc (\%) & Q.Time (s) & Acc (\%) & Q.Time (s) \\
\hline\hline
FP16   & 79.9              & --              & 81.8              & --              \\
\hline
GPTQ   &  71.94 $\pm$ 0.06  & 13.86 $\pm$ 2.09 & 77.60 $\pm$ 0.04  & 18.03 $\pm$ 0.30 \\
GPTAQ  &  74.09 $\pm$ 0.16  & 15.40 $\pm$ 0.70 & 78.18 $\pm$ 0.12  & 24.85 $\pm$ 0.75 \\
\rowcolor{blue!8}
CoreQ                  &  75.41 $\pm$ 0.07  & 14.75 $\pm$ 0.06 & 78.55 $\pm$ 0.09  & 21.20 $\pm$ 0.70 \\
\rowcolor{blue!8}
4-CoreQ &  \textbf{75.52} $\pm$ 0.08  & 17.41 $\pm$ 0.31 & 78.56 $\pm$ 0.12  & 24.74 $\pm$ 1.87 \\
\rowcolor{blue!8}
8-CoreQ &  \textbf{75.51} $\pm$ 0.07  & 17.62 $\pm$ 0.41 & \textbf{78.67} $\pm$ 0.10  & 25.18 $\pm$ 0.51 \\
\toprule
\end{tabular}}
\end{table}

%% file: files/tab-s-o-compare.tex
\begin{table}[h]
    \centering
    \caption{Wall-clock cost of computing the closed-form $\alpha_{\mathrm{corr}}$ versus the rounding step (LLaMA-2-7B, 3-bit per-channel quantization, mean over 5 seeds, in milliseconds). The closed-form coefficient adds $\approx\!1.5\%$ overhead on top of the existing rounding pipeline.}
    \vspace{0.05 in}
    \label{tab:s-o-comparison}
    \resizebox{1.0\linewidth}{!}{
    \begin{tabular}{lccccccc|c}
    \toprule
    Stage & attn.k\_proj & attn.v\_proj & attn.q\_proj & attn.o\_proj & mlp.up\_proj & mlp.gate\_proj & mlp.down\_proj & Avg \\
    \midrule
    $\alpha_{\mathrm{corr}}$ (ours) & 7.22 & 6.04 & 5.84 & 5.88 & 16.19 & 16.24 & 40.61 & 14.00 \\
    Rounding                        & 755.8 & 754.0 & 750.6 & 755.1 & 763.2 & 763.0 & 2004.4 & 935.2 \\
    \midrule
    Overhead (\%)                   & 0.96 & 0.80 & 0.78 & 0.78 & 2.12 & 2.13 & 2.03 & 1.50 \\
    \bottomrule
    \end{tabular}
    }
\vspace{-0.1 in}
\end{table}

%% file: files/tab5.tex
\begin{table}[h]
\centering
\caption{W3 weight-only quantization on Qwen3-8B and Phi-3-3.8B.
We report perplexity on WikiText-2 and C4 ($\downarrow$), average zero-shot
accuracy across six commonsense reasoning tasks ($\uparrow$), and wall-clock
quantization time (seconds). Mean$\,\pm\,$std over 5 calibration seeds. Best
per (granularity, model, metric) in bold.}
\vspace{0.05 in}
\label{tab:qwen-phi}
\setlength{\tabcolsep}{4pt}
\resizebox{\linewidth}{!}{%
\begin{tabular}{ll cccc | cccc}
\toprule
& & \multicolumn{4}{c|}{\textbf{Qwen3-8B}} & \multicolumn{4}{c}{\textbf{Phi-3-3.8B}} \\
\cmidrule(lr){3-6} \cmidrule(lr){7-10}
\textbf{Granularity} & \textbf{Method}
  & \textbf{Wiki2} ($\downarrow$)
  & \textbf{C4} ($\downarrow$)
  & \textbf{Avg.\,Acc} ($\uparrow$)
  & \textbf{Q.Time (s)}
  & \textbf{Wiki2} ($\downarrow$)
  & \textbf{C4} ($\downarrow$)
  & \textbf{Avg.\,Acc} ($\uparrow$)
  & \textbf{Q.Time (s)} \\
\midrule
& FP16 &  9.73 & 14.65 & 74.15 & ---
       &  6.01 &  9.11 & 75.41 & --- \\
\midrule
\multirow{4}{*}{Per-channel}
  & GPTQ   & 18.75 $\pm$ 0.45 & 19.69 $\pm$ 0.12 & 54.17 $\pm$ 0.51 & 499.2 $\pm$ 17.7
           & 15.46 $\pm$ 0.50 & 15.80 $\pm$ 0.05 & 54.48 $\pm$ 1.48 & 266.5 $\pm$  7.6 \\
  & GPTAQ  & 17.63 $\pm$ 0.59 & 19.08 $\pm$ 0.28 & 57.07 $\pm$ 0.90 & 647.2 $\pm$ 19.1
           & 12.99 $\pm$ 0.28 & 14.21 $\pm$ 0.15 & 57.19 $\pm$ 0.85 & 335.7 $\pm$  7.7 \\
  & LDLQ   & 16.18 $\pm$ 0.18 & 18.49 $\pm$ 0.08 & 56.19 $\pm$ 0.81 & 375.4 $\pm$ 13.4
           & 15.75 $\pm$ 0.74 & 16.03 $\pm$ 0.48 & 54.49 $\pm$ 0.65 & 216.1 $\pm$  2.8 \\
  & \cellcolor{blue!8} CoreQ
           & \cellcolor{blue!8} \textbf{15.44 $\pm$ 0.23}
           & \cellcolor{blue!8} \textbf{17.51 $\pm$ 0.08}
           & \cellcolor{blue!8} \textbf{63.58 $\pm$ 0.93}
           & \cellcolor{blue!8} 553.9 $\pm$ 12.0
           & \cellcolor{blue!8} \textbf{11.86 $\pm$ 0.34}
           & \cellcolor{blue!8} \textbf{13.68 $\pm$ 0.32}
           & \cellcolor{blue!8} \textbf{60.51 $\pm$ 1.30}
           & \cellcolor{blue!8} 302.1 $\pm$  5.8 \\
\midrule
\multirow{4}{*}{Per-group ($g128$)}
  & GPTQ   & 11.75 $\pm$ 0.25 & 15.09 $\pm$ 0.11 & 68.49 $\pm$ 0.51 & 494.9 $\pm$ 11.0
           &  8.84 $\pm$ 0.07 & 11.31 $\pm$ 0.07 & 67.84 $\pm$ 0.48 & 262.7 $\pm$  4.6 \\
  & GPTAQ  & 11.95 $\pm$ 0.08 & 15.13 $\pm$ 0.06 & 68.94 $\pm$ 0.42 & 670.4 $\pm$  9.7
           & \textbf{8.45 $\pm$ 0.02} & \textbf{10.97 $\pm$ 0.04} & \textbf{68.74 $\pm$ 0.29} & 332.0 $\pm$  4.0 \\
  & LDLQ   & \textbf{11.41 $\pm$ 0.05} & \textbf{14.65 $\pm$ 0.06} & 69.26 $\pm$ 0.78 & 419.5 $\pm$ 10.9
           &  9.07 $\pm$ 0.29 & 11.46 $\pm$ 0.25 & 67.01 $\pm$ 0.92 & 231.9 $\pm$  6.3 \\
  & \cellcolor{blue!8} CoreQ
           & \cellcolor{blue!8} 11.50 $\pm$ 0.09
           & \cellcolor{blue!8} 14.76 $\pm$ 0.04
           & \cellcolor{blue!8} \textbf{70.49 $\pm$ 0.93}
           & \cellcolor{blue!8} 566.4 $\pm$ 14.8
           & \cellcolor{blue!8} \textbf{8.40 $\pm$ 0.34}
           & \cellcolor{blue!8} \textbf{10.90 $\pm$ 0.27}
           & \cellcolor{blue!8} 67.95 $\pm$ 2.15
           & \cellcolor{blue!8} 308.8 $\pm$ 10.3 \\
\bottomrule
\end{tabular}%
}
\end{table}

%% file: files/tab-70b.tex
\begin{table}[h]
\centering
\caption{Symmetric 2-bit and 3-bit weight-only quantization on LLaMA-2-70B.
We report perplexity on WikiText-2 and C4 ($\downarrow$) and wall-clock
quantization time (seconds) under per-channel and per-group ($g{=}128$)
granularity. Mean$\,\pm\,$std over 5 calibration seeds. Best per (bits, granularity, metric) in bold.}
\vspace{0.05 in}
\label{tab:l2-70b}
\setlength{\tabcolsep}{4pt}
\resizebox{\linewidth}{!}{%
\begin{tabular}{cl ccc | ccc}
\toprule
& & \multicolumn{3}{c|}{\textbf{Per-channel}} & \multicolumn{3}{c}{\textbf{Per-group ($g{=}128$)}} \\
\cmidrule(lr){3-5} \cmidrule(lr){6-8}
\textbf{Bits} & \textbf{Method}
  & \textbf{Wiki2} ($\downarrow$)
  & \textbf{C4} ($\downarrow$)
  & \textbf{Q.Time (s)}
  & \textbf{Wiki2} ($\downarrow$)
  & \textbf{C4} ($\downarrow$)
  & \textbf{Q.Time (s)} \\
\midrule
FP16 & --- & 3.32 & 5.52 & --- & 3.32 & 5.52 & --- \\
\midrule
\multirow{4}{*}{W2}
  & GPTQ   & 181.31 $\pm$ 19.81 &  83.17 $\pm$  5.14 & 3301.2 $\pm$ 73.0
           &  12.38 $\pm$  1.42 &  11.08 $\pm$  0.10 & 3271.6 $\pm$  9.5 \\
  & GPTAQ  & 251.82 $\pm$ 80.72 &  83.59 $\pm$ 11.38 & 4663.4 $\pm$  3.0
           &  12.28 $\pm$  3.07 &  11.85 $\pm$  2.16 & 4846.5 $\pm$ 43.9 \\
  & LDLQ   & 193.51 $\pm$ 36.21 &  91.32 $\pm$ 10.08 & 2753.5 $\pm$ 26.7
           &   8.68 $\pm$  0.29 &   9.90 $\pm$  0.22 & 2939.7 $\pm$ 76.5 \\
  & \cellcolor{blue!8} CoreQ
           & \cellcolor{blue!8} \textbf{94.97 $\pm$ 12.49}
           & \cellcolor{blue!8} \textbf{38.46 $\pm$  2.37}
           & \cellcolor{blue!8} 4731.9 $\pm$ 69.3
           & \cellcolor{blue!8} \textbf{7.63 $\pm$ 0.49}
           & \cellcolor{blue!8} \textbf{8.95 $\pm$ 0.32}
           & \cellcolor{blue!8} 4772.1 $\pm$ 89.6 \\
\midrule
\multirow{4}{*}{W3}
  & GPTQ   & 5.11 $\pm$ 0.02 & 6.77 $\pm$ 0.02 & 3256.0 $\pm$   0.6
           & 4.00 $\pm$ 0.01 & 5.95 $\pm$ 0.01 & 3323.5 $\pm$  22.6 \\
  & GPTAQ  & 5.09 $\pm$ 0.05 & 6.73 $\pm$ 0.01 & 4727.5 $\pm$  37.1
           & 4.01 $\pm$ 0.02 & 5.95 $\pm$ 0.01 & 4731.5 $\pm$  90.9 \\
  & LDLQ   & 5.06 $\pm$ 0.02 & 6.73 $\pm$ 0.02 & 2855.0 $\pm$ 137.9
           & 3.97 $\pm$ 0.01 & 5.94 $\pm$ 0.00 & 2914.3 $\pm$   0.9 \\
  & \cellcolor{blue!8} CoreQ
           & \cellcolor{blue!8} \textbf{4.83 $\pm$ 0.03}
           & \cellcolor{blue!8} \textbf{6.52 $\pm$ 0.01}
           & \cellcolor{blue!8} 4785.1 $\pm$ 28.3
           & \cellcolor{blue!8} \textbf{3.93 $\pm$ 0.01}
           & \cellcolor{blue!8} \textbf{5.90 $\pm$ 0.00}
           & \cellcolor{blue!8} 4783.5 $\pm$ 96.8 \\
\bottomrule
\end{tabular}%
}
\end{table}

%% file: files/tab-instruct-model.tex
\begin{table}[h]
\centering
\caption{Weight-only quantization of LLaMA-3.1-8B-Instruct on reasoning benchmarks.
We report perplexity on WikiText-2 ($\downarrow$), accuracy on MMLU (5-shot, $\uparrow$), and exact-match accuracy
on GSM8K with chain-of-thought prompting (8-shot, $\uparrow$). Calibrated on 128 samples of C4. Mean$\,\pm\,$std over 3 calibration seeds.
Best result per (bits, granularity, metric) in bold.}
\vspace{0.05 in}
\label{tab:l31-instruct}
\setlength{\tabcolsep}{4pt}
\resizebox{\linewidth}{!}{%
\begin{tabular}{cl ccc | ccc}
\toprule
& & \multicolumn{3}{c|}{\textbf{Per-channel}} & \multicolumn{3}{c}{\textbf{Per-group ($g{=}128$)}} \\
\cmidrule(lr){3-5} \cmidrule(lr){6-8}
\textbf{Bits} & \textbf{Method}
  & \textbf{Wiki2} ($\downarrow$)
  & \textbf{MMLU} ($\uparrow$)
  & \textbf{GSM8K-CoT} ($\uparrow$)
  & \textbf{Wiki2} ($\downarrow$)
  & \textbf{MMLU} ($\uparrow$)
  & \textbf{GSM8K-CoT} ($\uparrow$) \\
\midrule
FP16 & --- & 6.24 & 67.95 & 75.66 & 6.24 & 67.95 & 75.66 \\
\midrule
\multirow{4}{*}{W3}
  & GPTQ   & 154.03 $\pm$ 27.68 & 30.00 $\pm$ 3.07 &  0.28 $\pm$ 0.48
           &  11.06 $\pm$  0.48 & 56.33 $\pm$ 0.51 & 36.52 $\pm$ 1.38 \\
  & GPTAQ  &  33.79 $\pm$ 11.11 & 32.77 $\pm$ 2.31 &  2.86 $\pm$ 1.73
           &   9.95 $\pm$  0.22 & 56.20 $\pm$ 1.11 & 42.53 $\pm$ 3.21 \\
  & LDLQ   &  19.89 $\pm$  1.53 & 39.62 $\pm$ 1.40 &  4.17 $\pm$ 0.92
           &   9.72 $\pm$  0.20 & 59.53 $\pm$ 0.62 & 45.69 $\pm$ 2.25 \\
  & \cellcolor{blue!8} CoreQ
           & \cellcolor{blue!8} \textbf{18.75 $\pm$ 2.98}
           & \cellcolor{blue!8} \textbf{39.76 $\pm$ 2.83}
           & \cellcolor{blue!8} \textbf{8.62 $\pm$ 1.89}
           & \cellcolor{blue!8} \textbf{9.32 $\pm$ 0.14}
           & \cellcolor{blue!8} \textbf{59.95 $\pm$ 0.36}
           & \cellcolor{blue!8} \textbf{48.14 $\pm$ 1.67} \\
\midrule
\multirow{4}{*}{W4}
  & GPTQ   & 188.91 $\pm$ 112.13 & 60.83 $\pm$ 2.28 & 38.19 $\pm$ 32.02
           &   9.60 $\pm$   1.03 & 65.67 $\pm$ 0.64 & 68.56 $\pm$  0.88 \\
  & GPTAQ  &  13.21 $\pm$   3.08 & 62.88 $\pm$ 0.34 & 58.98 $\pm$  3.72
           &   7.67 $\pm$   0.02 & 65.98 $\pm$ 0.10 & 70.08 $\pm$  0.09 \\
  & LDLQ   &   8.58 $\pm$   0.12 & \textbf{64.35 $\pm$ 0.33} & 60.50 $\pm$  1.20
           &   7.64 $\pm$   0.02 & 66.76 $\pm$ 0.19 & 69.87 $\pm$  1.92 \\
  & \cellcolor{blue!8} CoreQ
           & \cellcolor{blue!8} \textbf{8.22 $\pm$ 0.02}
           & \cellcolor{blue!8} 64.22 $\pm$ 0.20
           & \cellcolor{blue!8} \textbf{63.83 $\pm$ 0.92}
           & \cellcolor{blue!8} \textbf{7.58 $\pm$ 0.02}
           & \cellcolor{blue!8} \textbf{66.91 $\pm$ 0.23}
           & \cellcolor{blue!8} \textbf{70.58 $\pm$ 0.80} \\
\bottomrule
\end{tabular}%
}
\end{table}

%% file: appendix/extend-related-work.tex
\newpage
\section{Extended Discussion on Layer-wise Calibration Objectives for Compression}
\label{app:extended-related-work}

\label{app:rw-objective}

A widely used family of compression methods minimizes a quadratic reconstruction error that measures the layer's output distortion under a second-order (Hessian) surrogate \cite{nagel2020up, hubara2021accurate, li2021brecq, frantar2022obq}, with GPTQ being the widely adopted LLM-scale realization \cite{frantar2022gptq}. These methods share a common assumption: each layer's input activation is treated as fixed, and only the layer-local reconstruction error is optimized. Several recent works incorporate information about end-to-end model behavior into this proxy. OAC introduces an output-adaptive calibration objective motivated by output cross-entropy distortion and derives tractable approximations for layerwise optimization \cite{edalati2025oac}. GuidedQuant integrates end-loss gradient information into the objective while modeling cross-weight dependencies within output channels \cite{kimguidedquant}. YAQA targets end-to-end distribution preservation via Kronecker-factored approximations of the full-model KL Hessian and provides adaptive rounding guarantees \cite{tseng2025model}; and BoA exploits attention structure to derive a more informative local objective without backpropagation \cite{kim2024boa}. These methods improve calibration quality but typically incur substantially higher cost, as they require either end-loss gradient computation or more expensive Hessian approximations. 

A complementary line of work addresses the observation that upstream compression introduces distributional shifts in layer activations. To mitigate this, several methods redefine the layer-wise calibration objective to account for activation mismatch. GPTAQ \cite{ligptaq} targets the full mismatch objective—matching the full-precision layer output on clean activations against the quantized layer output on corrupted ones—and derives block-wise closed-form updates. ApiQ \cite{liao2024apiq} adopts a related mismatch objective but embeds it within a learning-based pipeline to jointly optimize quantized weights and LoRA adapters. Closely related to our continuous formulation is QEP \cite{arai2026qep}, which interpolates between standard Hessian-based calibration and full mismatch-aware calibration using a manually tuned, global propagation-strength coefficient. Alternatively, CBQ \cite{ding2023cbq} relaxes strict per-layer independence, quantizing blocks of layers to absorb propagated errors into an expanded optimization unit. Finally, a structurally related motif appears in low-rank SVD compression, where SAES-SVD \cite{hu2026saessvd} balances accumulated upstream error against local distortion via an adaptive alignment coefficient.

{\bf Addressing limitations} CoreQ advances this error-propagation paradigm but diverges from prior work in two fundamental ways. First, unlike GPTAQ (which assumes full mismatch correction is optimal) or QEP (which relies on a rigid global hyperparameter), we prove both theoretically and empirically that full mismatch calibration suffers from finite-sample overconfidence, and that the optimal correction strength varies heavily across layers. CoreQ solves this by orthogonally decomposing the mismatch to derive a principled, closed-form coefficient, $\alpha_{\mathrm{corr}}$, dynamically adapting to each layer's geometric capacity. Second, unlike CBQ or ApiQ, CoreQ requires neither expanded multi-layer optimization units nor gradient-based learning. Consequently, CoreQ yields a mathematically rigorous, mismatch-aware objective that perfectly preserves the standard Hessian-weighted quadratic structure, allowing us to seamlessly deploy our bounded discrete search ($K$-CoreQ) with negligible overhead compared to standard local PTQ.